\newcommand\BibTeX{{\rmfamily B\kern-.05em \textsc{i\kern-.025em b}\kern-.08em
T\kern-.1667em\lower.7ex\hbox{E}\kern-.125emX}}
\setlist[enumerate]{leftmargin=*,label={\arabic*)},itemsep=0pt,topsep=5pt,listparindent=15pt}
\newtheorem{theorem}{Theorem}
\newtheorem{proof}{Proof}[section]
\newcommand{\R}{\mathbb{R}}
\newcommand{\my}{}
\begin{document}

\runninghead{Zian Ning et al.}

\title{A Bearing-Angle Approach for Unknown Target Motion Analysis Based on Visual Measurements}

\author{Zian Ning\affilnum{1,2} and Yin Zhang\affilnum{2} and Jianan Li\affilnum{2} and Zhang Chen\affilnum{3}, and Shiyu Zhao\affilnum{4,2}}

\affiliation{\affilnum{1} Department of Computer Science \& Technology at Zhejiang University, Hangzhou, China\\
\affilnum{2} School of Engineering at Westlake University, Hangzhou, China\\
\affilnum{3} Department of Automation at Tsinghua University, Beijing, China\\
\affilnum{4} Research Center for Industries of the Future, Westlake University, Hangzhou, China
}

\corrauth{Shiyu Zhao, School of Engineering, Westlake University, No.600 Dunyu Road, HangZhou 310024, China.
\email{zhaoshiyu@westlake.edu.cn}}

\begin{abstract}
Vision-based estimation of the motion of a moving target is usually formulated as a \emph{bearing-only} estimation problem where the visual measurement is modeled as a bearing vector. Although the bearing-only approach has been studied for decades, a \emph{fundamental limitation} of this approach is that it requires extra lateral motion of the observer to enhance the target's observability. Unfortunately, the extra lateral motion conflicts with the desired motion of the observer in many tasks.
It is well-known that, once a target has been detected in an image, a bounding box that surrounds the target can be obtained.
Surprisingly, this common visual measurement especially its size information has not been well explored up to now.
In this paper, we propose a new \emph{bearing-angle} approach to estimate the motion of a target by modeling its image bounding box as bearing-angle measurements.
Both theoretical analysis and experimental results show that this approach can significantly enhance the observability \emph{without} relying on additional lateral motion of the observer.
The benefit of the bearing-angle approach comes with no additional cost because a bounding box is a standard output of object detection algorithms.
The approach simply exploits the information that has not been fully exploited in the past.
No additional sensing devices or special detection algorithms are required.
\end{abstract}

\keywords{Bearing-only target motion estimation, Pseudo-linear Kalman filter, Observability enhancement}

\maketitle

\section{Introduction}

This paper studies the problem of estimating the motion of a moving target object using a moving monocular camera. The target's geometric information such as its physical size is \emph{unknown} in advance. This problem is important in many fields \citep{Qiu2019, Griffin2021, Tekin2018}.
Our present work is particularly motivated by the task of aerial target pursuit, where a micro aerial vehicle (MAV) uses its onboard camera to detect, localize, and then pursue another flying MAV.
The task of aerial target pursuit, originally motivated by the interesting bird-catching-bird behaviors in nature \citep{Brighton2019}, potentially provides an effective approach to the defense of misused MAV \citep{Rothe2019, Dressel2019, Vrba2020}.

\begin{figure}
	\centering
	\includegraphics[width=\linewidth]{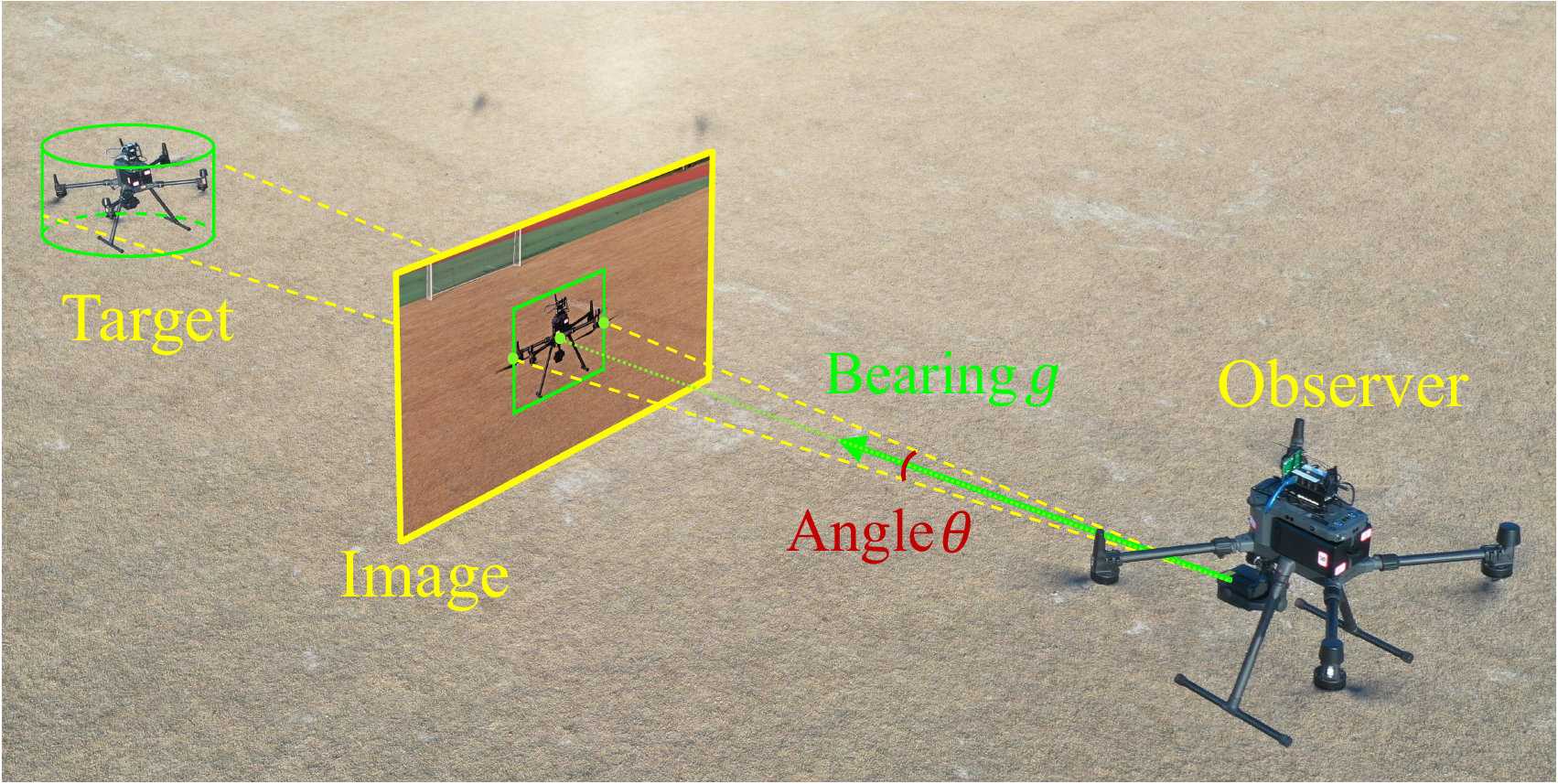}
	\caption{An observer MAV observes a target MAV with a monocular camera. The bearing $g$ and angle $\theta$ can be obtained from the bounding box that surrounds the target in the image.}
	\label{fig_architecture_outdoor}
\end{figure}

When a target has been detected in an image by a vision detection algorithm, we usually obtain a \emph{bounding box} that surrounds the target's image (see Fig.~\ref{fig_architecture_outdoor}).
The bounding box carries two types of useful information that can be used to estimate the target's motion.

The first type of useful information is the \emph{center point} of the bounding box.
The pixel coordinate of the center point can be used to calculate the spatial \emph{bearing vector} pointing from the camera to the target based on the pin-hole camera model \citep{Ma2012}.
Using the bearing vector to estimate the target's motion is referred to as \emph{bearing-only} target motion estimation \citep{Fogel1988, He2019, Li2022}.
As a problem that has been studied for more than 40 years, bearing-only target motion estimation was originally studied to estimate the motion of ships on the ocean surface \citep{hoelzer1978modified}, and regained increasing research attention in recent years in vision-based target estimation tasks \citep{Ponda2009, Anjaly2018, He2019}.

Bearing-only target motion estimation requires an \emph{observability condition}: The observer must have higher-order motion than the target and, more importantly, the higher-order motion must contain components that are orthogonal to the target's bearing vector \citep{Fogel1988}.
Motivated by this observability condition, enormous works have studied how an observer should move to enhance the observability \citep{Hammel1989, Sabet2016, Anjaly2018, He2019}.
For instance, in our recent work \citep{Li2022}, we proposed a helical guidance law so that a MAV moves along a helical curve to optimize the observability in the 3D space.

A \emph{limitation} of the observability condition of the classic bearing-only approach is that the observer must move in the lateral directions that are orthogonal to the bearing vector of the target.
Such additional lateral motion is usually unfavorable because it may conflict with the desired motion of the observer in many tasks.
For example, in an aerial target pursuit task, the pursuer is desired to approach the target as fast as possible and then keep stationary relative to the target. Then, the additional lateral motion would conflict with the desired motion.
It is, therefore, important to study other ways that can enhance the observability while avoiding unfavorable lateral motion.

The second type of useful information of a bounding box is its \emph{size} (either width or height).
The size of a bounding box is jointly determined by several factors such as the target's distance, the target's physical size, and the orientation of the camera.
The target's physical size is usually unknown in many tasks, especially in those antagonistic ones such as aerial pursuit of misused MAVs.
As a result, the size of the bounding box cannot directly infer the target's distance.
Nevertheless, it carries valuable information for localizing the target.

Surprisingly, the size information of the bounding box has not been well explored so far.
The work that is closely relevant to ours is the state-of-the-art one in \citep{Griffin2021}, where the size of a bounding box is used to localize unknown target objects.
Although the approach in \citep{Griffin2021} is inspiring, it relies on two assumptions: The target objects are stationary and the camera can only translate without rotating.
It is still an open problem how to estimate a target's motion when the two assumptions are not valid.
Moreover, the theoretical role of the size of a bounding box in target motion estimation has not been fully understood so far.
Although the work in~\citep{Vrba2020} also utilizes the size of the bounding box to estimate the target's position, it is assumed that the target's physical size is known in advance.

Estimating the motion of moving objects is also a fundamental problem in dynamic SLAM.
For example, the works in \citep{Yang2019,Qiu2019} firstly estimate the camera's pose and secondly estimate the target object's pose subject to a scale factor, and finally estimate the scale factor from multi-view measurements.
To estimate the target object's pose subject to a scale factor, \citep{Yang2019} and \citep{Qiu2019} rely on detecting, respectively, a 3D bounding box and sufficient feature points inside the 2D bounding box.
Different from \citep{Yang2019,Qiu2019}, our proposed approach merely utilizes a 2D image bounding box without further extracting feature points or a 3D bounding box inside the 2D bounding box.
As a result, one benefit is that this approach is more computationally efficient.
Moreover, this approach can handle the challenging small-target case where the target object is far and hence its image is small.
In this case, it would be unreliable to extract sufficient stable features or conduct 3D detection.

The aforementioned approaches in \citep{Griffin2021,Yang2019,Qiu2019} are all based on multiple views.
It is also possible to estimate the target's depth from a single view/image \citep{Tekin2018, Vrba2020}.
The single-view approach however requires prior information of the objects.
Moreover, it would be unable to successfully localize target objects with different sizes but similar appearances.
In this paper, we focus on the multi-view case.

In this paper, we propose a novel \emph{bearing-angle} target motion estimation approach that models a bounding box as bearing-angle measurements.
This approach can enhance the observability by fully exploiting the information in a bounding box rather than relying on the additional lateral motion of the observer.
The benefit of the proposed bearing-angle approach comes with no additional cost since the bounding box is a standard output of object detection algorithms.
The approach simply exploits the angle information that has not been fully exploited in the past.
No additional sensing devices or special detection algorithms are required.

The technical novelties of this approach are threefold.

1) The proposed approach does not directly use the size of a bounding box because the size is variant to the orientation of the camera.
That is, even if the target's relative position is unchanged, the size of the bounding box still varies when the camera rotates.
Motivated by this problem, we convert the size of the bounding box to an angle subtended by the target (see Fig.~\ref{fig_architecture_outdoor}).
The merit of using the angle measurement is that it is \emph{invariant} to the camera's orientation change (see Fig.~\ref{fig_cam_rotate}) and hence can greatly facilitate the estimator design.
In this way, the assumption in \citep{Griffin2021} that the camera can only translate but not rotate can be avoided.

2) Although the bearing-angle approach incorporates an additional angle measurement, it is nontrivial to see how to properly use this measurement because the angle does not directly infer the target's distance given that the target's size is unknown.
We notice that the angle is a joint nonlinear function of the target's physical size and relative distance.
Hence, the state vector, which only consists of the target's position and velocity in the conventional bearing-only approach, is augmented by the unknown target's physical size.
Since the bearing and angle measurements are all nonlinear functions of the target's state, we establish a pseudo-linear Kalman filter to properly utilize the measurements to enhance estimation stability.
Both simulation and real-world experiments verify the effectiveness of the proposed estimator.

3) Although an additional angle measurement is used, an additional unknown, the target's physical size, is also introduced into the estimator.
It is, therefore, nontrivial to see how the additional angle measurement can help improve the observability.
Motivated by this problem, we prove the necessary and sufficient observability condition for bearing-angle target motion estimation.
In particular, we show that the target's motion can be recovered if and only if the observer has a higher-order motion than the target.
Different from the bearing-only case, the higher-order motion is \emph{not} required to be in the lateral directions that are orthogonal to the bearing vector.
This is an important enhancement of the observability. As we show in various experiments, the bearing-angle approach can successfully recover the target's motion in many scenarios where the bearing-only approach fails.

\section{Related Work}

\subsection{Algorithms for bearing-only target motion estimation}
Bearing-only target motion analysis aims to estimate the target's motion states, such as position and velocity, using bearing measurement only.
It was originally motivated by ship localization and tracking in the ocean \citep{hoelzer1978modified}. With the rapid development of small-scale mobile robots equipped with cameras, the bearing-only approach regained increasing attention in recent years \citep{Ponda2009, Anjaly2018, He2019}.

Kalman filter-based estimators are widely used in the bearing-only target motion.
One challenge of applying the Kalman filter to the bearing-only estimation is the nonlinearity of the bearing measurement.
The conventional extended Kalman filter (EKF) exhibits divergence problems when applied to bearing-only target motion estimation \citep{Aidala1979, Lin2002}.
Several methods have been proposed to solve this problem.
They can be divided into two types.
The first type is the modified polar EKF, which was first proposed in \citep{hoelzer1978modified}.
In this approach, three observable quantities are separated from the unobservable ones to prevent divergence.
The work in \citep{Stallard1991} extends this approach to the case of spherical coordinates to track targets in 3D space.
The second type is the pseudo-linear Kalman filter, which is first proposed in \citep{Lingren1978} to solve the instability problem by transforming the nonlinear measurement equation into a pseudo-linear one.
However, this transformation makes the noise become non-Gaussian and highly correlated to the measurement matrix and then causes estimation bias.
Nevertheless, the work in \citep{Aidala1982} theoretically proves that the velocity estimation has no bias, and the position estimation bias can be removed by the observer's maneuvers.

Recently, other estimation algorithms based on advanced but more complex filters have been proposed.
The work in \citep{Farina1999} uses the maximum likelihood (MLE) algorithm to estimate the target's motion using bearing-only measurements.
The comparison with the Cramer-Rao lower bound indicates that the MLE-based estimator is effective against measurement errors.
The work in \citep{Dogancay2005} proposes a constrained total least-squares algorithm, which can improve the estimation accuracy when the error of bearing measurement is large.
Three different algorithms are used and compared in \citep{Lin2002}.
The results show that the EKF, the pseudo-linear filter, and the particle filter have similar performances in most situations, while the EKF loses track when the initial estimate error is large.

Another type of approach, called bearing-only trajectory triangulation \citep{Avidan2000}, estimates the target's position from the perspective of trajectory fitting.
It reconstructs the trajectory by intersecting parametric trajectory to a series of sight rays obtained from bearing measurement.
Once the trajectory is successfully fitted, the target's position at each time instant can be estimated by the intersection of the bearing and the trajectory.
The trajectory fitting relies on the assumption of the trajectory's shape.
However, in many applications, the target's trajectory is complex and unknown in advance.
Many consecutive studies aim to relax this assumption in various ways based on hypersurfaces \citep{Kaminski2004}, parametric temporal polynomials \citep{Yu2009}, or compact basis vectors \citep{Park2015}.

\subsection{Observability analysis of bearing-only target motion estimation}

Observability is a fundamental problem in bearing-only target motion estimation.
Early works mainly focus on whether the system is observable or not.
For example, the work in \citep{Lingren1978} uses the rank of observation matrix to determine the observability.
The work in \citep{Fogel1988} extends the observability criterion in \citep{Nardone1981} to the Nth-order target dynamics and inspires us for the observability analysis in Section~\ref{sec_observability_criteria}.
All these conditions indicate that the observer must have extra high-order motion in the lateral direction.
The observability condition can be significantly relaxed in our approach.

Unlike the works on determining whether the system is observable or not, some studies focus on quantifying the observability degree.
The work in \citep{Hammel1989} first introduces the Fisher information matrix (FIM) into the observability analysis.
The works in \citep{Sabet2016} and \citep{Anjaly2018} use FIM-based objective functions to maximize observability.
We also use the FIM in our former work \citep{Li2022} to optimize the 3D helical guidance law for better observability.
Another method called the geometric method uses the geometric relationship between the target and the observer in two consecutive time instants to derive the measure of observability \citep{He2019, Woffinden2009}, and the results are consistent with those derived using FIM.
Compared to the bearing-only approach, the observability degree of our bearing-angle method is sufficient to estimate the target's motion in many common scenarios such as tracking and guidance (see experiment results in Figs.~\ref{fig_matlab_3} and~\ref{fig_outdoor_1}).

\section{Problem Formulation}
\begin{figure}
\centering
\includegraphics[width=0.885\linewidth]{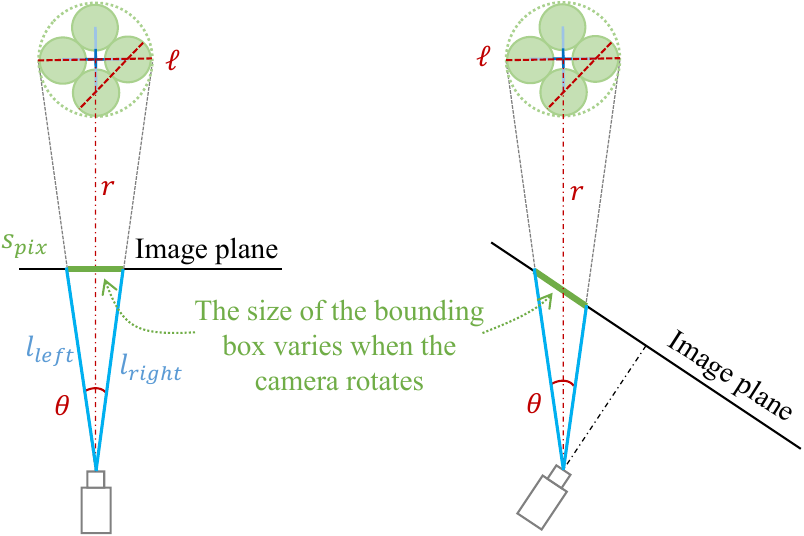}
\caption{The size of the bounding box varies when the camera rotates. By contrast, the angle subtended by the target object is invariant to the camera's orientation change.}
\label{fig_cam_rotate}
\end{figure}

Consider a target object moving in the 3D space. Its position and velocity at time $t_k$ are denoted as $p_T(t_k) \in\R^3$ and $v_T(t_k) \in\R^3$, respectively.
Suppose there is an observer carrying a monocular camera to observe the target.
The position of the observer is denoted as $p_o(t_k) \in\R^3$.
Here, we assume that the observer/camera's pose including its position and orientation can be obtained in other ways.
For example, it can be measured directly by RTK GPS~\citep{Li2022} or estimated by visual inertial odometry~\citep{Qiu2019}.
In the rest of the paper, the dependence of a variable on $t_k$ is dropped when the context is clear.

If the target object can be detected by a vision algorithm, we can obtain a bounding box surrounding the target object in the image.
Two types of information carried by the bounding box can be used to estimate the motion of the target.

First, the center point of the bounding box can be used to calculate the \emph{bearing} vector of the target.
In particular, denote $\my{g} \in \mathbb{R}^3$ as the unit bearing vector pointing from $p_o $ to $p_T $.
Suppose $\my{P}_\text{cam}\in\R^{3\times3}$ is the intrinsic parameter matrix of the camera \citep[Section~\ref{section_bearing-angle-target-motion-estimator}]{Ma2012}, and $\my{R}_\text{c}^\text{w} \in\R^{3\times 3}$ is the rotation from the camera frame to the world frame.
Then, the bearing vector $g$ can be calculated as
\begin{align*}
\my{g} =
\dfrac{
\my{R}_\text{c}^\text{w}
\my{P}_\text{cam}^{-1}
\my{q}_{\rm pix}
}{
\|\my{R}_\text{c}^\text{w}
\my{P}_\text{cam}^{-1}
\my{q}_{\rm pix}
\|
},
\end{align*}
where $\my{q}_{\rm pix} =[x_{\rm pix} , y_{\rm pix} , 1]^\mathrm{T} \in \mathbb{R}^3$.
Here, $(x_{\rm pix} ,y_{\rm pix})$ is the pixel coordinate of the center point of the bounding box.

Second, the size of the bounding box can be used to calculate the \emph{angle} subtended by the target in the camera's field of view.
The reason that we convert the bounding box's size to the angle is that the angle is invariant to the camera's orientation change (see Fig.~\ref{fig_cam_rotate}).
In particular, let $s_{\rm pix} $ denote the size of the bounding box.
It can be either the width or the height.
Let $\theta \in (0,\pi/2)$ be the angle.
According to the pin-hole camera model \citep[Section~\ref{section_bearing-angle-target-motion-estimator}]{Ma2012} and the law of cosine (see Fig.~\ref{fig_cam_rotate}), the angle can be calculated as
\begin{align*}
\theta = \arccos\left(\dfrac{l_\mathrm{left}^2 + l_\mathrm{right}^2 - s_\mathrm{pix}^2}{2l_\mathrm{left}l_\mathrm{right}}\right),
\end{align*}
where $l_\mathrm{left}=\sqrt{(f/\alpha)^2+(\delta x-s_\mathrm{pix}/2)^2+\delta y^2}\in\mathbb{R}$ and $l_\mathrm{right}=\sqrt{(f/\alpha)^2+(\delta x+s_\mathrm{pix}/2)^2+\delta y^2}\in\mathbb{R}$ are the distances in pixel from the camera center to the middle points of the left and right sides of the bounding box, respectively (Fig.~\ref{fig_architecture_outdoor}).
Moreover, $f$ and $\alpha$ denote the camera's focal length and single pixel size, respectively.
$i_{\text{width}}$ and $i_{\text{height}}$ represent the width and the height of the whole image in pixels, respectively. $\delta x=\|x_\text{pix}-i_\text{width}/2\|\in\mathbb{R}$ and $\delta y = \|y_\text{pix}-i_\text{height}/2\|\in\mathbb{R}$ are the distances between the center of the bounding box and the center of the image.

\begin{figure*}[!t]
	\centering
	\includegraphics[width=1\linewidth]{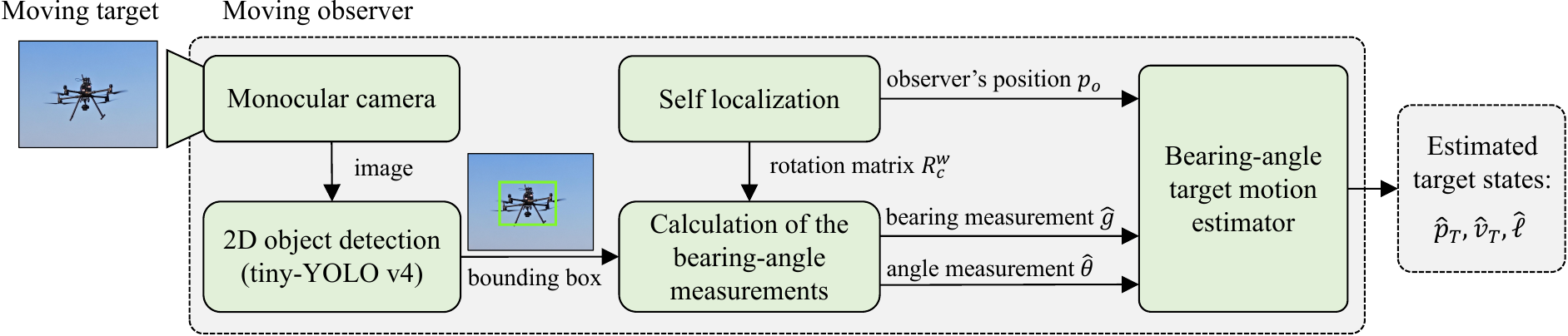}
	\caption{The architecture of the proposed approach. All the simulation and real-world experiments in this paper follow this architecture.}
	\label{fig_architecture_algorithm}
\end{figure*}

Our goal is to estimate the target's position and velocity, $p_T$ and $v_T$, based on the noisy measurements of the bearing vector $\my{g}$ and the angle $\theta$ together with the observer's own position $p_o$.
To achieve this goal, we propose a new bearing-angle target motion estimator (Fig.~\ref{fig_architecture_algorithm}).
The estimator is introduced in detail in Section~\ref{section_bearing-angle-target-motion-estimator}.
The observability of this estimator is analyzed based on Kalman's observability criterion in Section~\ref{sec_analysis_of_observability_matrix}.
We further prove a necessary and sufficient observability condition of the observer in Section~\ref{sec_observability_criteria}.
Numerical simulation results are given in Section~\ref{sec_matlab_simulation}.
More realistic AirSim simulation results are given in Section~\ref{sec_airsim_simulation}.
Finally, real-world experiments are given in Section~\ref{sec_real_world_experimental_validation}.

\section{Bearing-Angle Target Motion Estimator}
\label{section_bearing-angle-target-motion-estimator}

This section designs a bearing-angle target motion estimator based on the framework of pseudo-linear Kalman filtering. The key here is to establish appropriate measurement and state transition equations.

\subsection{States transition equation}
\label{sec_states_transition_equation}
The state vector of the target is designed as
\begin{align*}
\my{x}=
\left[
  \begin{array}{c}
    p_T \\
    v_T \\
    \ell \\
  \end{array}
\right]\in \mathbb{R}^7,
\end{align*}
where $\my{p}_T $ and $\my{v}_T $ are target's global position and velocity, respectively.
Here, $\ell>0$ is a scalar that represents the physical size of the target object in the dimension that is orthogonal to the bearing vector (Fig.~\ref{fig_cam_rotate}). In this paper, $\ell$ is assumed to be constant or varying slowly, which means that the physical size of the target object should be approximately invariant from different viewing angles. Here, $\ell$ corresponds to $\theta$, which further corresponds to either the width or height of the bounding box. Whether $\ell$ should correspond to the width or height depends on in which dimension the physical size of the target object is invariant when viewed from different angles.
More explanation is given in Section~\ref{sec_dynamical_model_of_size}.

Different from the bearing-only case where the state merely consists of the position and velocity, the state here is augmented by the target's physical size. This is due to the fact that the angle measurement is a function of the target's physical size, which should be estimated as well. One may wonder whether the state vector can also incorporate the target's acceleration. To estimate high-order motion (e.g., acceleration) of the target, the observer must have higher-order motion (e.g., nonzero jerk) according to the observability condition presented in Section~\ref{sec_observability_criteria}. Otherwise, the estimation would diverge. Therefore, it is preferred to exclude the acceleration and merely estimate the position and velocity.

If no information of the target's motion is available, it is common to model the target's motion as a discrete-time noise-driven double integrator:
\begin{align}\label{eq_state_transition}
	\my{x}(t_{k+1})=\my{F}\my{x}(t_k) +\my{q}(t_k) ,
\end{align}
where
\begin{align}
\label{eq_matrix_A}
\my{F}=
\begin{bmatrix}
\my{I}_{3\times3} & \delta t\my{I}_{3\times3} & \my{0}_{3\times1}  \\
\my{0}_{3\times3} & \my{I}_{3\times3}  & \my{0}_{3\times1}   \\
\my{0}_{1\times 3} & \my{0}_{1\times 3} & 1
\end{bmatrix}\in\mathbb{R}^{7\times 7},
\end{align}
with $\delta t$ as the sampling time, and $\my{I}$ and $\my{0}$ as the identity and zero matrices, respectively.
Here, $\my{q} \in\mathbb{R}^7$ is a zero-mean process noise satisfying $\my{q} \sim \mathcal{N}(0,\my{\Sigma}_q)$, where the covariance matrix is
\begin{align}
\my{\Sigma}_q=\text{diag}(0, 0, 0, \sigma_v^2, \sigma_v^2, \sigma_v^2, \sigma_\ell^2)\in\mathbb{R}^{7\times7}.
\end{align}
Here, $\sigma_v\in\mathbb{R}$ and $\sigma_\ell\in\mathbb{R}$ are the standard deviations of the target's velocity and size, respectively.
When the target's shape is irregular, $\ell$ may vary when viewed from different angles.
By letting $\sigma_\ell\ne0$, we can handle the case where $\ell$ varies slowly.
The dynamic modeling of $\ell$ is discussed in the following subsection.

\subsection{Dynamic modeling of target's physical size}
\label{sec_dynamical_model_of_size}

Since the target's physical size $\ell$ is a state variable to be estimated, it is important to discuss its dynamic model. In fact, the dynamic model of $\ell$ in \eqref{eq_state_transition} assumes that $\ell$ varies slowly. We next justify this modeling and provide more discussion.

First of all, $\ell$ corresponds to the physical size of the target object in the dimension that is orthogonal to the bearing vector. Its dynamics can be categorized into three cases.

\emph{1) $\ell$ is invariant.}
In theory, when $\ell$ is invariant, a change of $\theta$ implies a change of $r$.
As a result, the measurement of $\theta$ can help improve the system's observability, as proven in Section~\ref{sec_observability_criteria}.
An ideal case where $\ell$ is invariant is that the target object is a sphere or cylinder so that $\ell$ corresponds to its diameter \citep{Vrba2020}.
In practice, the target object does not have to be the ideal case. For example, consider an autonomous driving scenario where a focal vehicle uses a camera to localize its surrounding vehicles in the 2D plane.
Although the physical size of a surrounding vehicle changes greatly when viewed from behind or side, the height of the vehicle is \emph{invariant} from different side-view angles.
In this case, $\ell$ corresponds to the height of the vehicle, and we need to use the height of the image bounding box to calculate $\theta$.

\emph{2) $\ell$ varies slowly.}
If there does not exist any dimension in which the physical size of the target remains invariant, $\ell$ may vary slowly when the target is viewed from different angles. For example, in the tasks of aerial target pursuit, if the target is a quadcopter or hexacopter, then $\ell$ is approximately equal to the wheelbase but may vary slightly when viewed from different angles since the MAV is not a perfect cylinder.
In this case, $\ell$ corresponds to the wheelbase of the MAV, and we need to use the width of the image bounding box to calculate $\theta$.

If $\ell$ varies slowly, it can still be treated as invariant within short time intervals.
As long as the observability condition (Section~\ref{sec_observability_criteria}) is satisfied, the motion of the target as well as $\ell$ can be successfully estimated.
This fact is supported by the experimental results in Section~\ref{sec_sim_res_circular_scenario}.
It is however worth nothing that the performance of the proposed bearing-angle approach would degenerate to the conventional bearing-only one because the additional information brought by $\theta$ is used to estimate the time-varying $\ell$ rather than helping improve the system's observability.

\emph{3) $\ell$ varies rapidly.}
If $\ell$ varies rapidly due to certain reasons, it would be difficult to distinguish whether the change of $\theta$ is caused by the change of $\ell$ or the change of $r$.
For example, when a MAV is used to track a ground vehicle, $\ell$ in any dimension may vary rapidly when the relative motion between the MAV and the ground vehicle is highly dynamic.
In such scenarios, the additional information brought by $\theta$ is no longer sufficient to estimate the rapidly varying $\ell$ in this case. Additional visual information such as a 3D bounding box that indicates the target's 3D attitude is required. This is an important topic for future research but out of the scope of the present paper.

\subsection{Nonlinear measurement equations}

The bearing vector $\my{g} $ and the subtended angle $\theta $ are both nonlinear functions of the target's position. In particular,
\begin{subequations}
\label{eq_information}
\begin{align}
	\my{g} &=\dfrac{\my{p}_T -\my{p}_o }{r },
	\label{eq_bearing_measure} \\
	\theta &=2\arctan\left(\dfrac{\ell}{2r }\right)\approx \dfrac{\ell}{r },
	\label{eq_theta_measure}
\end{align}
\end{subequations}
where
$$r =\|\my{p}_T -\my{p}_o \|$$
is the distance between the target and the observer.
It is notable that there is an approximation in \eqref{eq_theta_measure}. This approximation is accurate.
Specifically, when $r>3\ell$, which is common in practice, it can be verified that the approximation error is less than $0.08\%$. The approximation error further decreases as $r$ increases.

In practice, measurements always contain noises.
First, denote $\hat{\my{g}} \in\mathbb{R}^3$ as the noise-corrupted bearing measurement. Then, we have
\begin{align}
\label{eq_noised_g_mear}
\hat{\my{g}}  = \my{R}\left(\my{\eta} , \epsilon \right) \my{g} ,
\end{align}
where $\my{R}\left(\my{\eta} , \epsilon \right) \in \mathbb{R}^{3\times 3}$ is a rotation matrix that perturbs $\my{g}$.
Here, $\my{\eta} \in\mathbb{R}^3$ is a unit vector representing a random rotation axis, and $\epsilon \in \mathbb{R}$ is a random rotation angle.
This rotation matrix would rotate the vector $\my{g} $ by an angle $\epsilon $ around the axis $\my{\eta} $.
The productive noise in \eqref{eq_noised_g_mear} can be transformed into an additive one:
\begin{align}\label{eq_noised_g_mear_add}
	\hat{\my{g}}  = \my{g}  + \my{\mu} ,
\end{align}
where $\my{\mu} =(\my{R}\left(\my{\eta} , \epsilon \right) - \my{I}_{3\times3})\my{g} \in\mathbb{R}^3$ is the measurement noise of the bearing vector.
The covariance of $\mu$ is derived in our previous work~\citep{Li2022}. Since the covariance is complex and involves unknown true values, we can approximately treat it as a Gaussian noise: $\mu\sim\mathcal{N}(0, \sigma_\mu^2 I_{3\times 3})$ \citep{Li2022}.

Substituting \eqref{eq_bearing_measure} into \eqref{eq_noised_g_mear_add} gives the \emph{nonlinear bearing measurement equation:}
\begin{align}\label{eq_bearing_measure_noise}
	\hat{\my{g}} &=\dfrac{\my{p}_T -\my{p}_o }{r } + \my{\mu} .
\end{align}

Second, denote $\hat{\theta} \in\mathbb{R}$ as the noise-corrupted measurement of the subtended angle. Then, we have
\begin{align}\label{eq_noise_theta}
	\hat{\theta} =\theta  + w ,
\end{align}
where $w \sim \mathcal{N}(0, \sigma^2_w)$ is the measurement noise.
Substituting \eqref{eq_theta_measure} into \eqref{eq_noise_theta} yields the \emph{nonlinear angle measurement equation:}
\begin{align}\label{eq_theta_measure_noise}
	\hat{\theta} &=\dfrac{\ell}{r } + w.
\end{align}

\subsection{Pseudo-linear measurement equations}

The measurement equations \eqref{eq_bearing_measure_noise} and \eqref{eq_theta_measure_noise} are nonlinear in the target's state. In the following, we convert the two equations to be pseudo-linear and then apply pseudo-linear Kalman filtering to achieve better estimation stability \citep{Lin2002}.

First, to convert the 3D bearing measurement to pseudo-linear, we introduce a useful orthogonal projection matrix:
\begin{align*}
	\my{P}_{\hat{\my{g}} }\doteq\my{I}_{3\times 3}-\hat{\my{g}} \hat{\my{g}}^\mathrm{T}  \in \mathbb{R}^{3\times 3}.
\end{align*}
This matrix plays an important role in the analysis of bearing-related estimation and control problems \citep{Zhao2019}. It has an important property: $$\my{P}_{\hat{\my{g}} }\hat{\my{g}} =\my{0}_{3\times 1}.$$
As a result, multiplying $r\my{P}_{\hat{\my{g}} }$ on both side of \eqref{eq_bearing_measure_noise} yields
\begin{align*}
\my{0}_{3\times 1}=\my{P}_{\hat{\my{g}} }(\my{p}_T -\my{p}_o) + r\my{P}_{\hat{\my{g}} }\my{\mu}
\end{align*}
and consequently
\begin{align*}
\my{P}_{\hat{\my{g}} }\my{p}_o =\my{P}_{\hat{\my{g}} }\my{p}_T  + r\my{P}_{\hat{\my{g}} }\my{\mu}.
\end{align*}
Rewriting this equation in terms of the target's state variables yields the \emph{pseudo-linear bearing measurement equation:}
\begin{align}\label{eq_pseudo_linear_measurement_g_equation}
\my{P}_{\hat{\my{g}} }\my{p}_o =
\begin{bmatrix}
\my{P}_{\hat{\my{g}} } &
\my{0}_{3\times4}
\end{bmatrix}
\left[
  \begin{array}{c}
    p_T \\
    v_T \\
    \ell \\
  \end{array}
\right]  +  r\my{P}_{\hat{\my{g}} }\my{\mu} .
\end{align}
Here, $\my{P}_{\hat{\my{g}} }\my{p}_o $ on the left-hand side is the new measurement, which is pseudo-linear in the target's state variables.
The reason that it is called "pseudo" is because the measurements also appear on the right-hand side of the equation, especially in the measurement matrix.

Second, we convert the nonlinear angle measurement in \eqref{eq_theta_measure_noise} to be pseudo-linear.
To that end, multiplying $r \my{\hat{g}} $ on both side of \eqref{eq_theta_measure_noise} yields
\begin{align}\label{eq_theta_pseudo_tem}
\hat{\theta} r\hat{\my{g}}  = \ell\hat{\my{g}} +wr\hat{\my{g}} .
\end{align}
It follows from \eqref{eq_bearing_measure_noise} that $r\my{\hat{g}}=\my{p}_T -\my{p}_o+r\mu$, substituting which into the left-hand side of \eqref{eq_theta_pseudo_tem} gives
\begin{align*}
\hat{\theta} (\my{p}_T -\my{p}_o+r\mu)  = \ell\hat{\my{g}} +wr\hat{\my{g}}.
\end{align*}
Reorganizing the above equation gives
\begin{align*}
\hat{\theta} \my{p}_o  = &\hat{\theta} \my{p}_T  - \ell\hat{\my{g}} + r(\hat{\theta}  \my{\mu}  - w \hat{\my{g}}).
\end{align*}
Rewriting this equation in terms of the target's state variables yields the \emph{pseudo-linear angle measurement equation:}
\begin{align}\label{eq_pseudo_linear_measurement_theta_equation}
\begin{aligned}
\hat{\theta} \my{p}_o  =&
\begin{bmatrix}
\hat{\theta} \my{I}_{3\times 3} & \my{0}_{3\times 3}  & -\hat{\my{g}}
\end{bmatrix}
\left[
  \begin{array}{c}
    p_T \\
    v_T \\
    \ell \\
  \end{array}
\right]
 + r(\hat{\theta}  \my{\mu}  - w \hat{\my{g}} ),
\end{aligned}
\end{align}
where $\hat{\theta} \my{p}_o $ is the new measurement that is pseudo-linear in the target's state variables.

\subsection{Bearing-angle estimation algorithm}

Combining  \eqref{eq_pseudo_linear_measurement_g_equation} and \eqref{eq_pseudo_linear_measurement_theta_equation} gives the compact form of the measurement equation:
\begin{align}\label{eq_pseudo_linear_measurement_equations}
\my{z} = \my{H} \my{x}  + \my{\nu} ,
\end{align}
where
\begin{subequations}
\begin{align}
\my{z} &=
	\begin{bmatrix}
	\my{P}_{\hat{g}} \my{p}_o   \\
	\hat{\theta} \my{p}_o
	\end{bmatrix}\in\mathbb{R}^6, \\
\my{H}& =
	\begin{bmatrix}
	\my{P}_{\hat{g}}  & \my{0}_{3\times 3} & \my{0}_{3\times 1} \\
	\hat{\theta} \my{I}_{3\times 3} & \my{0}_{3\times 3}  & -\hat{\my{g}}
	\end{bmatrix}\in\mathbb{R}^{6\times7},
	\label{eq_matrix_H}	\\
\my{\nu}  &=
	\begin{bmatrix}
	r \my{P}_{\hat{g}} \my{\mu}  \\
	r (\hat{\theta}  \my{\mu}  - w \hat{\my{g}} )
	\end{bmatrix}
	\in\mathbb{R}^6.
	\label{eq_final_measurement_noise}
\end{align}
\end{subequations}
Here, $\nu$ can be rewritten as a matrix form
\begin{align*}
    \nu=E
    \begin{bmatrix}
        \mu \\ w
    \end{bmatrix},
\end{align*}
where
\begin{align}\label{eq_E_mat}
    E=r
    \begin{bmatrix}
        P_{\hat{g}} & 0_{3\times 1}\\
        \hat{\theta}I_{3\times 3} & -\hat{g}
    \end{bmatrix}\in\mathbb{R}^{6\times 4}.
\end{align}
As a result, $\nu$ can be approximately treated as a linear transformation of Gaussian noises.
Its covariance matrix can be calculated as
\begin{align*}
\my{\Sigma}_{\my{\nu}}  = E
\begin{bmatrix}
\sigma_\mu^2 I_{3\times 3} & 0_{3\times1}\\
0_{1\times 3} & \sigma_w^2
\end{bmatrix}
E^\mathrm{T}\in\mathbb{R}^{6\times6}.
\end{align*}
Although the quantities in $E$ such as $\hat{g}$ and $\hat{\theta}$ contain measurement noises, it is a common practice to treat them as deterministic quantities. Otherwise, if, for example, $\hat{g}$ is split to $\hat{g}=g+\mu$ and we consider the noise separately, the expression of $\nu$ would be a complex function of the true values and the noises. Since the true values are unknown, the covariance cannot be calculated.
Moreover, $r $ in \eqref{eq_E_mat} is the true target range, which is unknown. We can use the estimated value $\hat{r} =\|\hat{\my{p}}_T -\my{p}_o \|$ to replace it in implementation. Here, $\hat{p}_T\in\mathbb{R}^3$ is the estimated value of the target's position. This technique has been used in bearing-only target estimation \citep{He2018, Li2022}.

With the state transition equation \eqref{eq_state_transition} and the measurement equation \eqref{eq_pseudo_linear_measurement_equations}, the bearing-angle estimator can be realized by the Kalman filter.
For a quick reference, we list the steps below.
The prediction steps are
\begin{align*}
\hat{\my{x}}^{-}(t_k) &= \my{F}\hat{\my{x}}(t_{k-1}), \\
\my{P}^{-}(t_k) &= \my{F}\my{P}(t_{k-1})\my{F}^\mathrm{T} + \my{\Sigma}_q,
\end{align*}
where $\hat{\my{x}}^{-}(t_k)\in\mathbb{R}^7$ and $\my{P}^{-}(t_k)\in\mathbb{R}^{7\times7}$ are the prior estimated state and covariance matrix, respectively.
The correction steps are
\begin{align*}
\my{K}(t_k) &= \my{P}^{-}(t_k)\my{H}^\mathrm{T}(t_k)\left[\my{H}(t_k)\my{P}^{-}(t_k)\my{H}^\mathrm{T}(t_k)+\my{\Sigma}_\nu\right]^{\dagger}, \\
\hat{\my{x}}(t_k) &= \hat{\my{x}}^{-}(t_k) + \my{K}(t_k)\left[\my{z}(t_k)-\my{H}(t_k)\hat{\my{x}}^{-}(t_k)\right], \\
\my{P}(t_k) &=\left[\my{I}_{7\times 7} -\my{K}(t_k)\my{H}(t_k) \right]\my{P}^{-}(t_k),
\end{align*}
where $\my{K}(t_k)\in\mathbb{R}^{7\times6}$ is the Kalman gain matrix, $\hat{\my{x}}(t_k) $  and $\my{P}(t_k)$ are posterior estimated state and covariance matrix, and symbol $\dagger$ denotes the pseudoinverse.
The usage of pseudoinverse in the Kalman filter is a common practice to prevent the situation that $\my{H}(t_k)\my{P}^{-}(t_k)\my{H}^\mathrm{T}(t_k)+\my{\Sigma}_\nu$ is rank deficient \citep{YOSHIKAWA1972,Kulikov2018}.

\section{Observability Analysis by Kalman's Criterion}\label{sec_analysis_of_observability_matrix}

Although an additional angle measurement is adopted in the bearing-angle estimator, it is nontrivial to see whether this additional measurement can improve the system's observability because an additional unknown variable, the target's physical size, is also required to estimate. It is therefore necessary to study the observability conditions under which the target's motion can be successfully estimated.

In this and the next sections, we present two methods to analyze the observability conditions. The first method, as presented in this section, relies on Kalman's observability criterion, which is to check the rank of the observability matrix of a linear system. The second method, as presented in the next section, relies on solving a set of linear equations.
Both methods have been adopted in the literature to analyze the observability of estimators \citep{Zhao2015, Fogel1988}.
For the bearing-angle estimator, the first method considers the specific dynamics of the filter but is not able to handle the case when the target's motion has a higher order.
The second method can handle the high-order motion of the target but does not consider the dynamics of the filter. We will show that the conclusions given by the two methods are consistent.
In both of the methods, we consider the case where $\ell$ is invariant.

\subsection{The observability matrix}

Consider a time horizon of $k\geq 3$ consecutive steps.
The observability matrix of the system of \eqref{eq_matrix_H} and \eqref{eq_matrix_A} can be calculated as
\begin{align}\label{eq_Qo}
	\my{Q}=
	\begin{bmatrix}
	\my{H}(t_1) \\
	\my{H}(t_2)\my{F} \\
	\my{H}(t_3)\my{F}^2 \\
	\cdots \\
	\my{H}(t_k)\my{F}^{k-1} \\
	\end{bmatrix}\in\mathbb{R}^{6k\times7}.
\end{align}
Substituting the expressions of $F$ and $H$ in \eqref{eq_matrix_A} and \eqref{eq_matrix_H} into \eqref{eq_Qo} yields
\begin{align*}
\my{Q}=
\left[
\begin{array}{ccc}
\my{P}_g(t_1) & \my{0}_{3\times 3} & \my{0}_{3\times 1} \\
\theta(t_1)\my{I}_{3\times 3} & \my{0}_{3\times 3}  & -\my{g}(t_1) \\
\hdashline
\my{P}_g(t_2) & \delta t\my{P}_{g}(t_2) & \my{0}_{3\times 1} \\
\theta(t_2)\my{I}_{3\times 3} & \delta t\theta(t_2)\my{I}_{3\times 3}  & -\my{g}(t_2) \\
\hdashline
\vdots & \vdots & \vdots \\
\hdashline
\my{P}_g(t_k) & (k-1)\delta t\my{P}_{g}(t_k) & \my{0}_{3\times 1} \\
\theta(t_k)\my{I}_{3\times 3} & (k-1)\delta t\theta(t_k)\my{I}_{3\times 3}  & -\my{g}(t_k)\\
\end{array}
\right].
\end{align*}
Note that the noises in the bearing and angle measurements are neglected when we analyze the fundamental observability property.
After a series of elementary row transformations in $\my{Q}$, we can obtain
\begin{align}\label{eq_Qo_2}
\my{Q}
\rightarrow
\begin{bmatrix}
\my{I}_{3\times 3} & \my{0}_{3\times 3} & -\my{g}(t_1)/\theta(t_1) \\
\my{0}_{3\times 3} & \my{I}_{3\times 3} & -\delta\my{v}(t_2)/\ell \\
\vdots & \vdots & \vdots \\
\my{0}_{3\times 3} & \my{I}_{3\times 3} & -\delta\my{v}(t_k)/\ell \\
\hdashline
\my{0}_{3k\times 3} & \my{0}_{3k\times 3} & \my{0}_{3k\times 1}
\end{bmatrix},
\end{align}
where
\begin{align*}
\delta\my{v}(t_k) \doteq \my{v}_T(t_k) - \my{v}_o(t_k)
\end{align*}
is the relative velocity.

In the following two subsections, we analyze the rank of the observability matrix in two scenarios where the observer moves with zero and nonzero acceleration, respectively. In the two scenarios, the target is always assumed to move with a constant velocity:
\begin{align*}
\my{v}_T(t_k) = \my{v}_T^\text{const}.
\end{align*}

\subsection{Case 1: the observer's velocity is constant}
Denoted $\my{v}_o\in \mathbb{R}^3$ as the velocity of the observer.
Consider the case where the observer has a constant velocity $\my{v}_o^\text{case1}(t_i)=\my{v}_o^\text{const}$ for any $i\in\{1,\dots,k\}$.
Then, the relative velocity is also constant:
\begin{align}\label{eq_delta_vel_case1}
\delta \my{v}^\text{case1}(t_i) = \my{v}_T^\text{const} - \my{v}_o^\text{const} = \delta\my{v}^\text{const}.
\end{align}
Substituting \eqref{eq_delta_vel_case1} into \eqref{eq_Qo_2} and conducting elementary row transformation yields
\begin{align}\label{eq_Qo_3}
\my{Q}^\text{case1}
\rightarrow
\left[
\begin{array}{cc:c}
\my{I}_{3\times 3} & \my{0}_{3\times 3} & -\my{g}(t_1)/\theta(t_1) \\
\my{0}_{3\times 3} & \my{I}_{3\times 3} & -\delta\my{v}^\text{const}/\ell \\
\hdashline
\my{0}_{6(k-1)\times 3} & \my{0}_{6(k-1)\times 3} & \my{0}_{6(k-1)\times 1}
\end{array}\right].
\end{align}
Since the upper $6\times7$ block of \eqref{eq_Qo_3} has full row rank and the lower block is zero, the rank of $\my{Q}^\text{case1}$ is
\begin{align*}
\text{rank}\left(\my{Q}^\text{case1}\right) = 6.
\end{align*}
Since the number of states is seven and the rank is six, we know there is \emph{one unobservable mode}.
To identify this unobservable mode, we calculate the unobservable subspace, which is the null space of $\my{Q}$:
\begin{align}\label{eq_unobservable_subspace}
\text{Null}\left(\my{Q}^\text{case1}\right) = \text{span}\left\{
\begin{bmatrix}
\my{g}(t_1)/\theta(t_1)  \\
\delta\my{v}^\text{const}/\ell \\
1
\end{bmatrix}\right\}.
\end{align}
According to \eqref{eq_unobservable_subspace}, the unobservable mode is
\begin{align}
x^T
\left[
\begin{array}{c}
\my{g}(t_1)/\theta(t_1)  \\
\delta\my{v}^\text{const}/\ell \\
1
\end{array}
\right]
=
\my{p}_T^\mathrm{T}\dfrac{\my{g}(t_1)}{\theta(t_1)}+
\my{v}_T^\mathrm{T}\dfrac{\delta\my{v}^\text{const}}{\ell} + \ell.
\label{eq_unobservable_mode}
\end{align}%
Although there is only one unobservable mode, this mode given in \eqref{eq_unobservable_mode} involves all the states including the target's position, velocity, and physical size. It suggests that the estimation of the three quantities is coupled. In conclusion, we know that, if the target moves with a constant velocity, its states are unobservable when the observer moves with a constant velocity.

\subsection{Case 2: the observer's velocity is time-varying}

We now consider the case where the observer has nonzero acceleration so that its velocity is time-varying across the time horizon from $t_1$ to $t_k$.

Denote $\my{a}_o(t_i)\in\mathbb{R}$ as the observer's acceleration, which can be approximated as
\begin{align}\label{eq_acc}
\my{a}_o(t_i) &\approx
\dfrac{\my{v}_o(t_i) - \my{v}_o(t_{i-1})}{\delta t} \nonumber\\
&=-\dfrac{\left[\my{v}_T^\text{const} - \my{v}_o(t_i)\right] - \left[\my{v}_T^\text{const} - \my{v}_o(t_{i-1})\right]}{\delta t} \nonumber\\
&=-\dfrac{\delta \my{v}(t_i) - \delta \my{v}(t_{i-1})}{\delta t}.
\end{align}
Substituting \eqref{eq_acc} into \eqref{eq_Qo_2} and performing elementary row transformation yields
\begin{align}\label{eq_Q_case2_final}
\my{Q}^\text{case2}
\rightarrow
\left[\begin{array}{ccc}
\my{I}_{3\times 3} & \my{0}_{3\times 3} & -\my{g}(t_1)/\theta(t_1) \\
\my{0}_{3\times 3} & \my{I}_{3\times 3} & -\delta\my{v}(t_2)/\ell \\
\my{0}_{3\times 3} & \my{0}_{3\times 3} & \delta t \my{a}_o(t_3)/\ell \\
\hdashline
\vdots & \vdots &\vdots \\
\my{0}_{3\times 3} & \my{0}_{3\times 3} & \delta t \my{a}_o(t_k)/\ell \\
\my{0}_{3k\times 3} & \my{0}_{3k\times 3} & \my{0}_{3k\times 1}
\end{array}\right].
\end{align}
The upper $6\times7$ block in \eqref{eq_Q_case2_final} has full column rank.
Therefore, if $a_o(t_i)\ne0$ for any $i\geq3$, then
\begin{align*}
\text{rank}\left(\my{Q}^\text{case2}\right) = 7,
\end{align*}
Which is the same as the number of estimated states.
Therefore, the target's state is observable when the observer moves with nonzero acceleration.

\subsection{Summary of this section}

From the above analysis, we know that when the target has a constant velocity, its states including its position, velocity, and physical size are observable if and only if the observer has non-zero accelerations.

The critical difference of this condition from the bearing-only case is that the target's states are still observable \emph{even if the observer moves along the bearing vector} towards or backward the target.
By contrast, for a bearing-only estimator, moving along the bearing vector is insufficient to recover the target's motion. Therefore, the additional lateral motion of the observer required in the bearing-only case is \emph{not} required in the bearing-angle case anymore, which provides better flexibility for designing the observer's motion.

\section{Observability Analysis by Solving Linear Equations}\label{sec_observability_criteria}

This section extends the observability condition obtained in the last section to more general cases where the target's velocity does not have to be constant.

\subsection{Problem formulation}

The observability problem that we aim to solve is to determine whether $\my{p}_T(t)$ can be recovered from $\my{p}_o(t)$ and $g(t),\theta(t)$.

Suppose the target's motion can be described by an $n$th-order polynomial during a time interval:
\begin{align}\label{eq_target_nth_Order}
	\my{p}_T(t)=\my{b}_0+\my{b}_1t+\cdots+\my{b}_nt^n,
\end{align}
where $\my{b}_0, \my{b}_1, \cdots, \my{b}_n\in\mathbb{R}^3$ are unknown constant vectors.
If we can determine the values of $\{b_i\}_{i=0}^n$, then we can determine the target's motion and hence it is observable.
Although polynomials cannot represent all trajectories, they can effectively approximate a majority of them according to the method of Taylor expansion. This is especially true if we consider a short time horizon. This kind of technique has been adopted in the observability analysis of bearing-only target motion estimation tasks~\citep{Nardone1981, Lee2010}.

Suppose the observer's motion is described by
\begin{align*}
	\my{p}_o(t)=\my{c}_0+\my{c}_1t+\cdots+\my{c}_nt^n+\my{h}(t),
\end{align*}
where $\my{c}_0, \my{c}_1, \cdots, \my{c}_n\in\mathbb{R}^3$ are constant parameters, and
\begin{align}\label{eq_definition_h}
\my{h}(t) = \my{d}_1 t^{n+1}+\my{d}_2t^{n+2}+\cdots
\end{align}
represents \emph{higher-order} motion with $\my{d}_1, \my{d}_2, \cdots\in\mathbb{R}^3$.
It can be verified that the derivatives of $\my{h}(t)$ satisfy $\my{h}^{(i)}(0)=\my{0}_{3\times 1}$ for $i=0,1,\cdots, n$.
Let $\my{s}(t)\in\mathbb{R}^3$ be the relative motion between the target and the observer:
\begin{align}\label{eq_relative_motion}
	\my{s}(t)&\doteq\my{p}_T(t)-\my{p}_o(t)  \nonumber\\
	&\doteq\my{s}_0+\my{s}_1t+\cdots+\my{s}_nt^n+\my{h}(t),
\end{align}
where $\my{s}_i = \my{d}_i - \my{c}_i\in\R^3$ for $i = 0,1,\cdots, n$.

If we can determine $\{s_i\}_{i=0}^n$, then $s(t)$ and hence $p_T(t)$ can be determined.
Therefore, we next study under what conditions $\{s_i\}_{i=0}^n$ can be uniquely determined.
Since $\my{p}_T(t)-\my{p}_o(t)=g(t)r(t)$ according to \eqref{eq_bearing_measure} and $r(t)=\ell/\theta(t)$ according to \eqref{eq_theta_measure}, we have
$$s(t)=\my{p}_T(t)-\my{p}_o(t)=g(t)r(t)=\frac{g(t)}{\theta(t)}\ell. $$
Substituting the above equation into \eqref{eq_relative_motion} yields
\begin{align}\label{eq_st_tem}
\my{s}_0+\my{s}_1t+\cdots+\my{s}_nt^n+\my{h}(t)=\frac{g(t)}{\theta(t)}\ell.
\end{align}
Here, $\my{s}_0, \cdots, \my{s}_n, \ell$ are unknowns to be determined and $\my{g}(t),\theta(t),\my{h}(t)$ are known.
Equation~\eqref{eq_st_tem} can be reorganized to a linear equation:
\begin{align}\label{eq_linear_equations}
	\my{A}(t)\my{X} = \my{h}(t),
\end{align}
where
\begin{align*}
\my{X}&=
\begin{bmatrix}
\my{s}_0^\mathrm{T}, \my{s}_1^\mathrm{T}, \cdots, \my{s}_n^\mathrm{T}, \ell
\end{bmatrix}^\mathrm{T}\in\mathbb{R}^{3n+4},
\end{align*}
and
\begin{align}\label{eq_original_A}
\my{A}(t)&=
\begin{bmatrix}
\my{I}_{3\times3}, t\my{I}_{3\times3}, \cdots, t^n\my{I}_{3\times3}, \rho(t)
\end{bmatrix}\in\mathbb{R}^{3\times(3n+4)},
\end{align}
where
\begin{align}\label{eq_rho_denote}
\rho(t)&\doteq-\dfrac{\my{g}(t)}{\theta(t)}\in\R^3.
\end{align}
Therefore, the problem that we aim to solve becomes determining whether $X$ can be uniquely solved from \eqref{eq_linear_equations}.

\subsection{Necessary and sufficient observability condition}

We next present a necessary and sufficient condition under which the solution $X$ of \eqref{eq_linear_equations} is unique.

\begin{theorem}[(Necessary and sufficient observability condition)]
\label{theorem_observability_confition}
The target's motion $p_T(t)$ can be uniquely determined by the observer's motion $p_o(t)$, the bearing $g(t)$, and the angle $\theta(t)$ if and only if
\begin{align*}
\my{h}(t)\neq\my{0}_{3\times1},
\end{align*}
which means that the order of the observer's motion must be greater than the target.
\end{theorem}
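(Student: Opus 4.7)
My plan is to reduce the theorem to a uniqueness question for the linear system $\my{A}(t)\my{X}=\my{h}(t)$ from \eqref{eq_linear_equations}, and then attack the two directions separately by exploiting the polynomial-degree structure of $\my{A}$ and of $\rho(t)=-g(t)/\theta(t)$. Once the solution $\my{X}=[\my{s}_0^\mathrm{T},\dots,\my{s}_n^\mathrm{T},\ell]^\mathrm{T}$ is pinned down, the relative motion $\my{s}(t)$ and hence $\my{p}_T(t)=\my{p}_o(t)+\my{s}(t)$ is recovered, so \textbf{unique solvability of \eqref{eq_linear_equations} is equivalent to observability}.

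For the \emph{necessity} direction, I would assume $\my{h}(t)\equiv \my{0}_{3\times 1}$ and construct an explicit family of distinct solutions. In this case \eqref{eq_linear_equations} degenerates to the homogeneous identity $\my{A}(t)\my{X}=\my{0}$, which from \eqref{eq_original_A} reads
\begin{align*}
\my{s}_0+\my{s}_1t+\cdots+\my{s}_nt^n+\rho(t)\ell=\my{0}_{3\times 1}.
\end{align*}
Scaling any solution by an arbitrary $c>0$ yields $(c\my{s}_0,\dots,c\my{s}_n,c\ell)$, which still satisfies the identity, so $\my{X}$ is not uniquely determined. This is exactly the classical scale ambiguity: without higher-order motion of the observer, the physical size $\ell$ and the range are coupled and cannot be disentangled from bearing and angle measurements alone.

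For the \emph{sufficiency} direction, I would take two solutions $\my{X}_1,\my{X}_2$ of \eqref{eq_linear_equations} and let $\Delta\my{X}=\my{X}_1-\my{X}_2$ with entries $\Delta\my{s}_i,\Delta\ell$. Subtraction gives the functional identity
\begin{align*}
\Delta\my{s}_0+\Delta\my{s}_1t+\cdots+\Delta\my{s}_nt^n+\rho(t)\Delta\ell=\my{0}_{3\times 1}\quad\text{for all }t.
\end{align*}
If $\Delta\ell=0$, matching coefficients of the polynomial identity immediately forces $\Delta\my{s}_i=\my{0}$ for $i=0,\dots,n$, and hence $\Delta\my{X}=\my{0}$. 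Otherwise, one would be able to solve for $\rho(t)$ as a polynomial in $t$ of degree at most $n$. The \emph{main obstacle}, and the crux of the argument, is ruling this out using $\my{h}(t)\not\equiv\my{0}_{3\times 1}$. I would combine \eqref{eq_bearing_measure}, \eqref{eq_theta_measure}, \eqref{eq_relative_motion} and \eqref{eq_rho_denote} to obtain
\begin{align*}
\rho(t)\ell=-\my{s}(t)=-\bigl(\my{s}_0+\my{s}_1t+\cdots+\my{s}_nt^n+\my{h}(t)\bigr),
\end{align*}
so requiring $\rho(t)$ to be a polynomial of degree at most $n$ is equivalent to requiring $\my{h}(t)$ to be such a polynomial. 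But by \eqref{eq_definition_h} every nonzero term of $\my{h}(t)$ has degree at least $n+1$, so this forces $\my{h}(t)\equiv\my{0}_{3\times 1}$, contradicting the hypothesis. Hence $\Delta\ell=0$, and by the first case $\Delta\my{X}=\my{0}$, establishing uniqueness and completing the proof.
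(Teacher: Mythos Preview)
Your proof is correct and takes a genuinely different route from the paper's. The paper differentiates the identity $\my{A}(t)\my{X}=\my{h}(t)$ up to order $N\ge n+1$, stacks the resulting equations into $\bar{\my{A}}(t)\my{X}=\bar{\my{h}}(t)$, and then checks when the tall matrix $\bar{\my{A}}(t)$ has full column rank; the rank condition reduces to $\rho^{(i)}(t)\neq\my{0}$ for some $i\ge n+1$, which via $\rho(t)=-\my{s}(t)/\ell$ becomes $\my{h}^{(i)}(t)\neq\my{0}$. You bypass all of this by treating $\my{A}(t)\my{X}=\my{h}(t)$ directly as a functional identity in $t$: you observe that the null space of the map $\my{X}\mapsto\my{A}(\cdot)\my{X}$ is nontrivial precisely when $\rho(t)$ is a polynomial of degree at most $n$, and then use the same relation $\rho(t)\ell=-\my{s}(t)$ to translate this into $\my{h}(t)\equiv\my{0}$. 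Your argument is shorter and avoids derivatives altogether; the paper's derivative-stacking approach, on the other hand, dovetails naturally with Kalman's observability-matrix analysis in Section~\ref{sec_analysis_of_observability_matrix} and sets up the discrete-observation count in Theorem~\ref{theorem_observation_number}, where the finite-difference analogue of $\bar{\my{A}}$ reappears.
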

\begin{proof}
Since the row number of $\my{A}(t)$ is less than its column number, \eqref{eq_linear_equations} is an under-determined system whose solution cannot be uniquely determined.
However, in the continuous time domain, we can use additional higher derivatives of this equation to uniquely determine $X$.

In particular, taking the $i$th-order derivative on both sides of \eqref{eq_linear_equations} gives $A^{(i)}(t)X=h^{(i)}(t)$. Consider any integer $N$ satisfying $N\ge n+1$. Combining the equations with $i\in\{0,1,\dots,N\}$  gives
\begin{align}\label{eq_new_linear_equtions}
	\bar{\my{A}}(t)\my{X} = \bar{\my{h}}(t),
\end{align}
where
\begin{align}\label{eq_new_A}
	\bar{\my{A}}(t) =
\left[
  \begin{array}{c}
    \my{A}(t) \\
	\my{A}^{'}(t) \\
	\vdots \\
	\my{A}^{(N)}(t)
  \end{array}
\right],\qquad
\bar{\my{h}}(t)\left[
  \begin{array}{c}
    \my{h}(t)\\
	\my{h}^{'}(t)\\
	\vdots\\
	\my{h}^{(N)}(t)\\
  \end{array}
\right].
\end{align}
Here, $\bar{\my{A}}(t)\in\mathbb{R}^{(3N+3)\times (3n+4)}$ and $\bar{\my{h}}(t)\in\mathbb{R}^{3N+3}$.
Since $N\ge n+1$, $\bar{A}(t)$ is a tall matrix and \eqref{eq_new_linear_equtions} is an over-determined system.

We next examine when $\bar{A}(t)$ has full column rank.
Substituting \eqref{eq_original_A} into $\bar{A}(t)$ yields
\begin{align*}
\bar{\my{A}}(t)=
	\left[\begin{array}{cccc:c}
	\my{I}_{3\times3}& t\my{I}_{3\times3}& \cdots& t^n\my{I}_{3\times3}& \rho(t) \\
	\my{0}_{3\times3}& \my{I}_{3\times3}& \cdots& nt^{n-1}\my{I}_{3\times3}& \rho^{'}(t) \\
	\vdots & \vdots & \ddots & \vdots & \vdots \\
	\my{0}_{3\times3}& \my{0}_{3\times3}& \cdots& n!\my{I}_{3\times3}& \rho^{(n)}(t) \\
	\hdashline
	\my{0}_{3\times3}& \my{0}_{3\times3}& \cdots& \my{0}_{3\times3}& \rho^{(n+1)}(t) \\
\vdots & \vdots & \vdots & \vdots & \vdots  \\
\my{0}_{3\times3}& \my{0}_{3\times3}& \cdots& \my{0}_{3\times3}& \rho^{(N)}(t) \\
	\end{array}\right].
\end{align*}
Since the top-left block of $\bar{A}(t)$ is a full-rank square matrix, $\bar{\my{A}}(t)$ has full column rank if and only if there exists $i\in\{n+1,\dots,N\}$ such that
\begin{align}\label{eq_observability_criteria_2}
	\rho^{(i)}(t)\neq \my{0}_{3\times1}.
\end{align}
Since $\rho(t)=-g(t)/\theta(t)$ as shown in \eqref{eq_rho_denote} and $g(t)/\theta(t)=(\my{s}_0+\my{s}_1t+\cdots+\my{s}_nt^n+\my{h}(t))/\ell$ as shown in \eqref{eq_st_tem}, we can rewrite \eqref{eq_observability_criteria_2} to
\begin{align}\label{eq_critia_2}
-\dfrac{1}{\ell}(\my{s}_0+\my{s}_1t+\cdots+\my{s}_nt^n+\my{h}(t))^{(i)}\neq \my{0}_{3\times1}.
\end{align}
Since $i\ge n+1$, \eqref{eq_critia_2} is equivalent to
\begin{align}\label{eq_observability_criteria_final}
\my{h}^{(i)} (t) \neq \my{0}_{3\times1}.
\end{align}
According to the definition of $\my{h}(t)$ in \eqref{eq_definition_h}, the condition in \eqref{eq_observability_criteria_final} is equivalent to
\begin{align*}
\my{h}(t)\neq \my{0}_{3\times1}.
\end{align*}
The proof is complete.
\end{proof}

Some important remarks about Theorem~\ref{theorem_observability_confition} are given below.

1) The necessary and sufficient condition suggested by Theorem~\ref{theorem_observability_confition} is that the observer should have higher-order motion than the target.
For example, when the target is stationary, the observer should move with a nonzero velocity. When the target moves with a constant velocity, the observer should move with a nonzero acceleration.

2) The necessary and sufficient condition given by Theorem~\ref{theorem_observability_confition} has a \emph{key difference} from the bearing-only case that the higher-order motion in the bearing-angle case is \emph{not} required to be orthogonal to the bearing vector, making the bearing-angle approach more flexible than the bearing-only one.
For example, the bearing-angle approach can estimate the target's motion even if the observer simply moves along the bearing vector.

3) In the special case where the target moves with a constant velocity, the condition in Theorem~\ref{theorem_observability_confition} is consistent with the one obtained in Section~\ref{sec_analysis_of_observability_matrix}. Although the condition in Theorem~\ref{theorem_observability_confition} allows more general target motion, the analysis in Section~\ref{sec_analysis_of_observability_matrix} is still meaningful since it is directly related to the dynamic model used in the pseudo-linear Kalman filter.

4) In practice, we would not estimate the target's motion by using the method of solving an equation like \eqref{eq_new_linear_equtions}. That is because such a method involves calculating high-order derivatives, which are challenging to obtain accurately in practice. The role of this equation is to provide a fundamental perspective on whether there is sufficient information to uniquely recover the target's motion.

\subsection{Number of observations required}

\begin{figure*}[!ht]
\normalsize
\begin{align}
\label{eq_A_22}
\tilde{A}
\rightarrow
\left[\begin{array}{ccccc:c}
\my{I} & t_1\my{I} & \cdots & t_1^{n-1}\my{I} & t_1^n\my{I} & \rho(t_1) \\
\my{0} & \my{I} & \cdots & {\Delta(t_2^{n-1}, t_1^{n-1})}\my{I} & {\Delta(t_2^n, t_1^n)}{}\my{I} & {\Delta(\rho(t_2),\rho(t_1) )}{} \\
\vdots & \vdots & \ddots & \vdots & \vdots & \vdots \\
\my{0} & \my{0} & \cdots & (n-1)!\my{I} & {\Delta^{n-1}(t_n^n, \cdots , t_1^n)}{} & {\Delta^{n-1}(\rho(t_n),\cdots,\rho(t_1) )}{} \\
\my{0} & \my{0} & \cdots & \my{0} & n!\my{I} &  {\Delta^{n}(\rho(t_{n+1}),\cdots,\rho(t_1) )}{} \\
\hdashline
\my{0} & \my{0} & \cdots & \my{0} & \my{0} & {\Delta^{n+1}(\rho(t_{n+2}),\cdots,\rho(t_1) )}{} \\
\vdots & \vdots & \vdots & \vdots & \vdots & \vdots  \\
\my{0} & \my{0} & \cdots & \my{0} & \my{0} & {\Delta^{N-1}(\rho(t_N),\cdots,\rho(t_1) )}{} \\
\end{array}
\right]
\end{align}
\hrulefill
\vspace*{4pt}
\end{figure*}

It is of practical importance to study how many discrete observations are required to recover the target's motion. Although Theorem~\ref{theorem_observability_confition} gives an observability condition, it does not answer this question because it is based on the continuous time domain. We next answer this question by exploring multiple discrete time steps.

\begin{theorem}[(Number of discrete observations)]\label{theorem_observation_number}
If the observer's motion satisfies the observability condition in Theorem~\ref{theorem_observability_confition}, it is necessary and sufficient to use at least $n+2$ observations to recover the target's motion. Here, $n$ is the order of the target's polynomial motion as shown in \eqref{eq_target_nth_Order}.
\end{theorem}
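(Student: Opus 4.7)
The plan is to convert the continuous-time equation \eqref{eq_linear_equations} into a stacked discrete system, reduce it to the form shown in \eqref{eq_A_22}, and then examine when the resulting matrix attains full column rank $3n+4$. I would stack $N$ evaluations to form $\tilde{\my{A}}\my{X}=\tilde{\my{h}}$ with $\tilde{\my{A}}\in\R^{3N\times(3n+4)}$ built by vertically concatenating the blocks $\my{A}(t_1),\dots,\my{A}(t_N)$ from \eqref{eq_original_A}; the target's motion is uniquely recoverable from the $N$ observations if and only if $\tilde{\my{A}}$ has full column rank.

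The first step I would carry out is a Vandermonde-style row reduction mimicking Newton's divided-difference scheme: repeatedly subtract consecutive rows and normalize by the corresponding $t_{k+1}-t_k$. This brings $\tilde{\my{A}}$ into the block upper-triangular form of \eqref{eq_A_22}, whose leading $3(n+1)$-dimensional block is upper triangular with factorial identity diagonal blocks $\my{I},\my{I},2!\my{I},\dots,n!\my{I}$, and whose only potentially nonzero entries below the $(n+1)$-th block row live in the last column and equal the divided differences $\Delta^i(\rho(t_{i+1}),\dots,\rho(t_1))$ for $i\geq n+1$.

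From this reduced form the rank analysis is direct. The first $3n+3$ columns are linearly independent by triangularity, so $\tilde{\my{A}}$ attains the desired rank $3n+4$ if and only if the last column contributes an additional independent direction, which happens if and only if there exists some $i\in\{n+1,\dots,N-1\}$ with $\Delta^i(\rho(t_{i+1}),\dots,\rho(t_1))\neq\my{0}_{3\times 1}$. The mere existence of such an index $i$ requires $N\geq n+2$, which gives the necessary direction. For sufficiency at $N=n+2$, I would substitute $\rho(t)=-(\my{s}_0+\my{s}_1 t+\cdots+\my{s}_n t^n+\my{h}(t))/\ell$ from \eqref{eq_st_tem} into the single relevant divided difference. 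Since $(n+1)$-th divided differences annihilate every polynomial of degree at most $n$, the expression collapses to $-\Delta^{n+1}(\my{h}(t_{n+2}),\dots,\my{h}(t_1))/\ell$, which is nonzero for a generic choice of sampling times whenever $\my{h}(t)\not\equiv\my{0}_{3\times 1}$, i.e., precisely when the observability condition of Theorem~\ref{theorem_observability_confition} holds.

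The main obstacle I anticipate is executing the divided-difference row reduction cleanly while preserving the $3\times 3$ block structure (so that scalar inverses of time differences interact properly with the identity blocks), and making the final step rigorous: translating the qualitative statement $\my{h}(t)\not\equiv\my{0}_{3\times 1}$ into a guaranteed nonzero vector-valued divided difference at a concrete set of sampling times, rather than appealing to genericity informally. A short argument using the leading nonzero coefficient $\my{d}_j$ of $\my{h}(t)$ together with the Vandermonde determinant of the sampling times should close this gap.
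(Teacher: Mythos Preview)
Your proposal is correct and follows essentially the same route as the paper: stack the $N$ evaluations of \eqref{eq_linear_equations}, perform a difference-based row reduction to reach the block upper-triangular form \eqref{eq_A_22}, and read off full column rank from a nonzero high-order difference of $\rho$ in the last column. Your sufficiency step---using that $(n{+}1)$-th divided differences annihilate degree-$n$ polynomials to isolate $-\Delta^{n+1}(\my{h})/\ell$---is slightly cleaner than the paper's, which instead argues that $\Delta^i$ approximates $\rho^{(i)}$ for small $\delta t$; in either version the final nonvanishing claim is heuristic, and the Vandermonde-type fix you sketch would close that gap for both.
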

\begin{proof}
Consider $t_1,\dots,t_N$ time instances. Each time instance corresponds to an equation like \eqref{eq_linear_equations}: $\my{A}(t_i)\my{X} = \my{h}(t_i)$ for $i=1,\dots,N$.
Combining these equations gives
\begin{align}\label{eq_convergence_linear_eqs}
\tilde{\my{A}}\my{X}=\tilde{\my{h}},
\end{align}
where
\begin{align}\label{eq_new_A_2}
	\tilde{\my{A}} =
\left[
  \begin{array}{c}
    \my{A}(t_1) \\
	\vdots \\
	\my{A}(t_N)
  \end{array}
\right],\qquad
\tilde{\my{h}}\left[
  \begin{array}{c}
    \my{h}(t_1)\\
	\vdots\\
	\my{h}(t_N)\\
  \end{array}
\right].
\end{align}
Here, $\tilde{\my{A}}\in\mathbb{R}^{(3N) \times (3n+4)}$ and $\tilde{\my{h}}\in\mathbb{R}^{3N}$.

(\emph{Necessity}) Since $X\in\R^{3n+4}$, we need at least $N\ge n+2$ observations so that $\tilde{\my{A}}$ is a tall matrix and hence \eqref{eq_convergence_linear_eqs} is an over-determined system.

(\emph{Sufficiency})
Suppose we have $N\ge n+2$ discrete observations.
Substituting \eqref{eq_original_A} into \eqref{eq_new_A_2} yields
\begin{align*}
\tilde{A} =
\begin{bmatrix}
\my{I}_{3\times 3} & t_1\my{I}_{3\times 3} & \cdots & t_1^n\my{I}_{3\times 3} & \rho(t_1) \\
\my{I}_{3\times 3} & t_2\my{I}_{3\times 3} & \cdots & t_2^n\my{I}_{3\times 3} & \rho(t_2)  \\
\vdots & \vdots && \vdots & \vdots  \\
\my{I}_{3\times 3} & t_{n+1}\my{I}_{3\times 3} & \cdots & t_{n+1}^n\my{I}_{3\times 3} & \rho(t_{n+1}) \\
\my{I}_{3\times 3} & t_{n+2}\my{I}_{3\times 3} & \cdots & t_{n+2}^n\my{I}_{3\times 3} & \rho(t_{n+2})\\
\vdots & \vdots & \vdots & \vdots & \vdots\\
\my{I}_{3\times 3} & t_{N}\my{I}_{3\times 3} & \cdots & t_{N}^n\my{I}_{3\times 3} & \rho(t_{N})\\
\end{bmatrix}.
\end{align*}
Starting from the last line in $\tilde{A}$, subtract the previous line from each subsequent line, and repeat this process.
Finally, we can obtain \eqref{eq_A_22} (the equation is too long and located at the top of another page).
Here, $\Delta^n$ represents the $n$th-order time difference \citep{MilneThomson2000}.
For example, $\Delta (a_2, a_1)=(a_2-a_1)/\delta t$, $\Delta^2 (a_3, a_2, a_1) = \Delta (\Delta(a_3, a_2), \Delta(a_2, a_1))=[(a_3-a_2)/\delta t-(a_2-a_1)/\delta t]/\delta t$.
When $\delta t$ is sufficiently small, the time difference is an approximation of the derivative.
When the observability condition in Theorem~\ref{theorem_observability_confition} is satisfied, there exists $i\ge n+1$ such that $\rho^{(i)}(t)\neq 0$ as shown in \eqref{eq_observability_criteria_2}. As a result, there exists $i\ge n+1$ such that
\begin{align*}
\Delta^{i}(\rho(t_{i+1}),\cdots,\rho(t_1))\neq \my{0}.
\end{align*}
The above implication is valid because $\Delta^{i}$ is an approximation of the $i$th-order derivative when $\delta t$ is sufficiently small.
Then, $\tilde{A}$ in \eqref{eq_A_22} has full column rank and hence \eqref{eq_convergence_linear_eqs} has a unique solution.
\end{proof}

Theorem~\ref{theorem_observation_number} suggests that when the target is stationary and hence $n=0$, at least two discrete observations are sufficient to localize the target. This is true even if the two observations are acquired when the observer moves along the bearing vector.
When the target moves with a constant velocity and hence $n=1$, at least three discrete observations are sufficient to localize the target, which is consistent with the results in Section~\ref{sec_analysis_of_observability_matrix}.

\section{Numerical Simulation Results}\label{sec_matlab_simulation}
\begin{figure*}[!t]
\centering
\subfloat[Scenario 1: Circular motion around the target. Both the bearing-only and bearing-angle approaches work well, but the bearing-angle one converges faster.]{
\includegraphics[width=1 \linewidth]{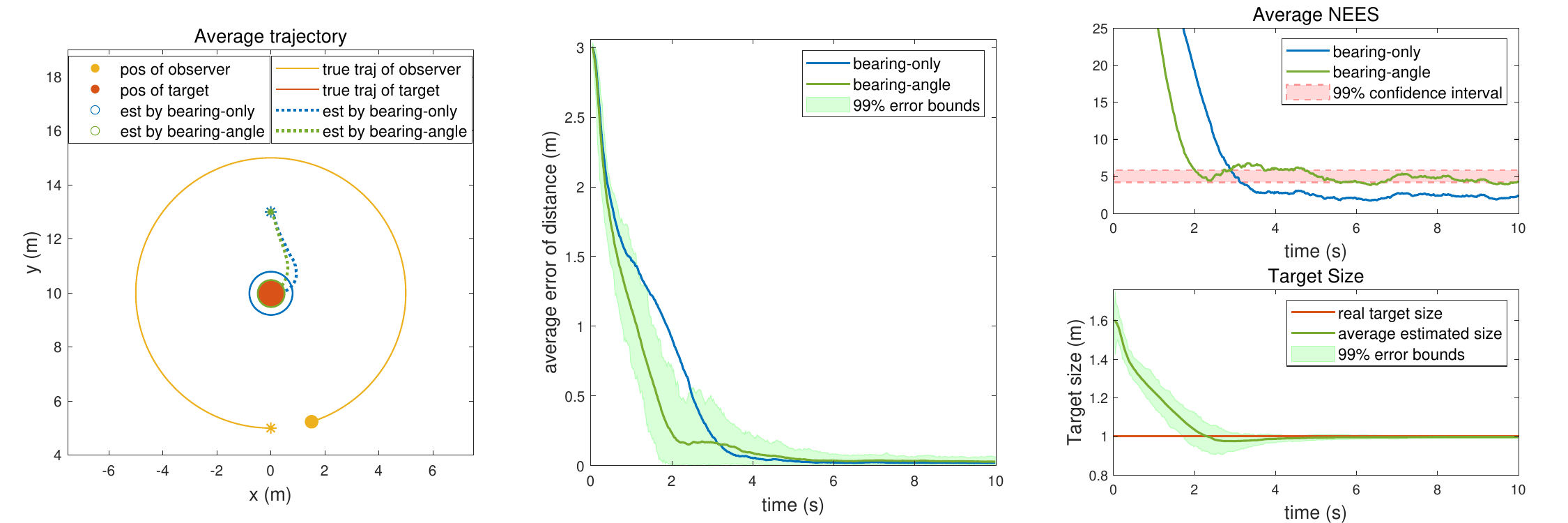}
\label{fig_matlab_1}
}
\hfill
\subfloat[Scenario 2: Straight motion towards and backwards the target. The bearing-only approach fails, but the bearing-angle approach works effectively.]{
\includegraphics[width=1 \linewidth]{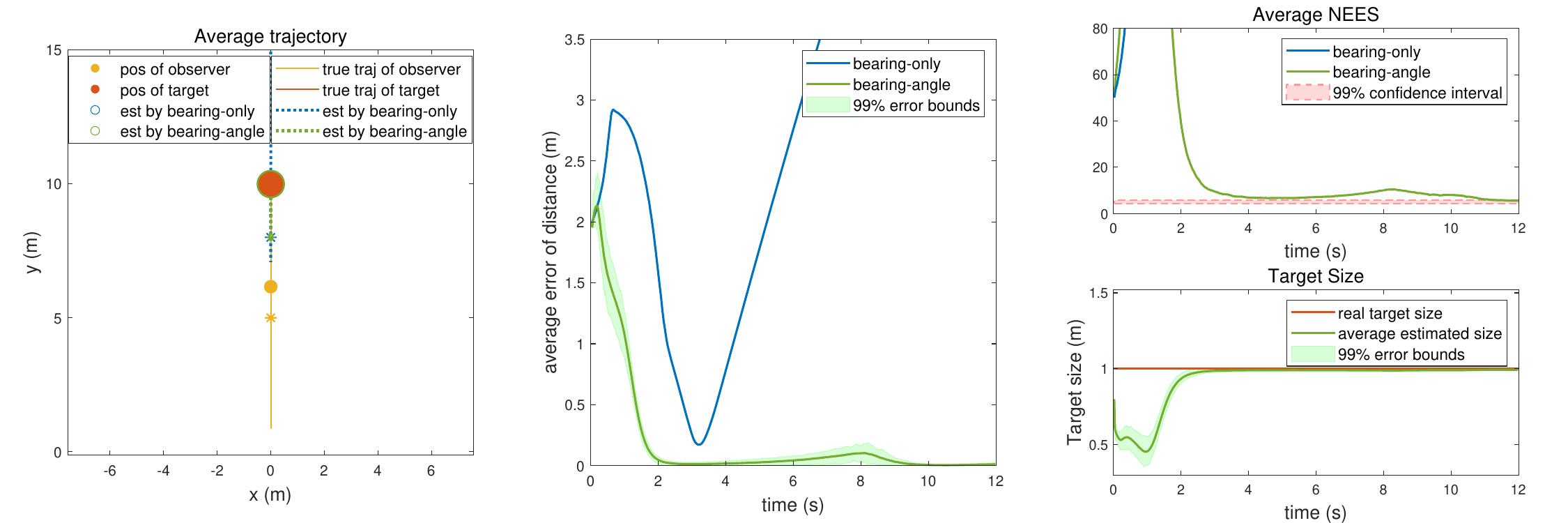}
\label{fig_matlab_2}
}
\hfill
\subfloat[
Scenario 3: Approaching the target by a guidance law. The bearing-only approach works unstably, but the bearing-angle approach works effectively.]{
\includegraphics[width=1 \linewidth]{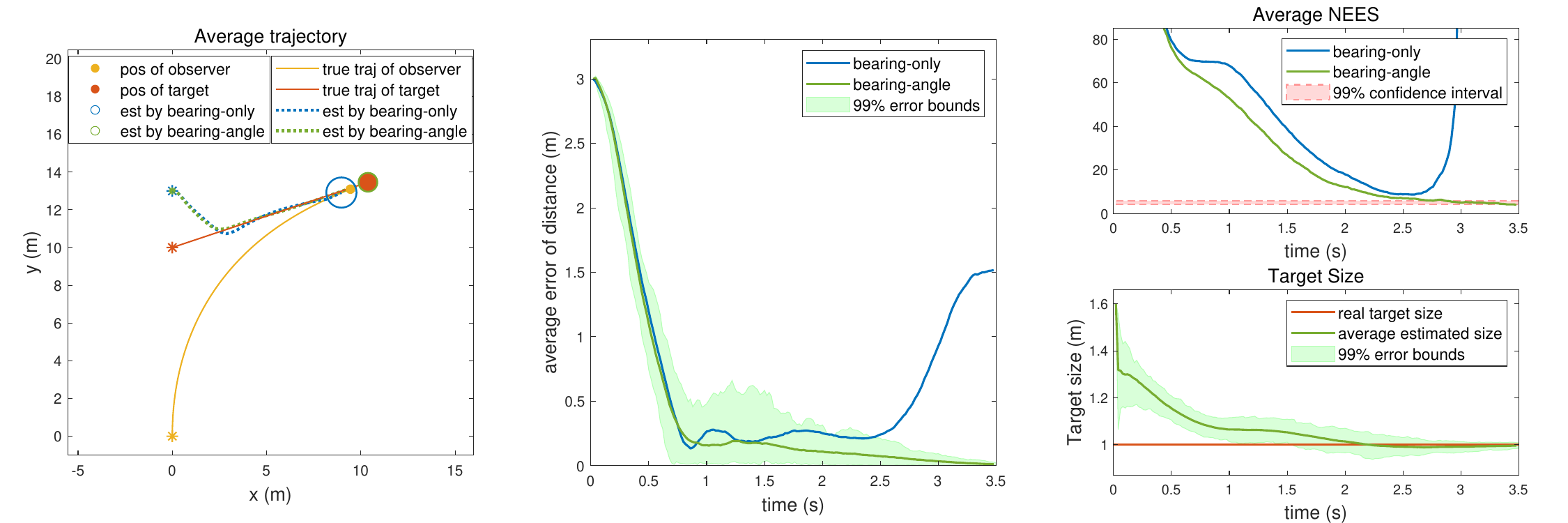}
\label{fig_matlab_3}
}
\caption{Numerical simulation results based on 100 Monte Carlo runs in three scenarios.}
\end{figure*}

\begin{figure*}[!t]
\label{fig_matlab_varying_ell}
\centering
\subfloat[
The observer moves around the square-shaped target. The target spins rapidly at $2\pi$~rad/s.]{
\includegraphics[width=1 \linewidth]{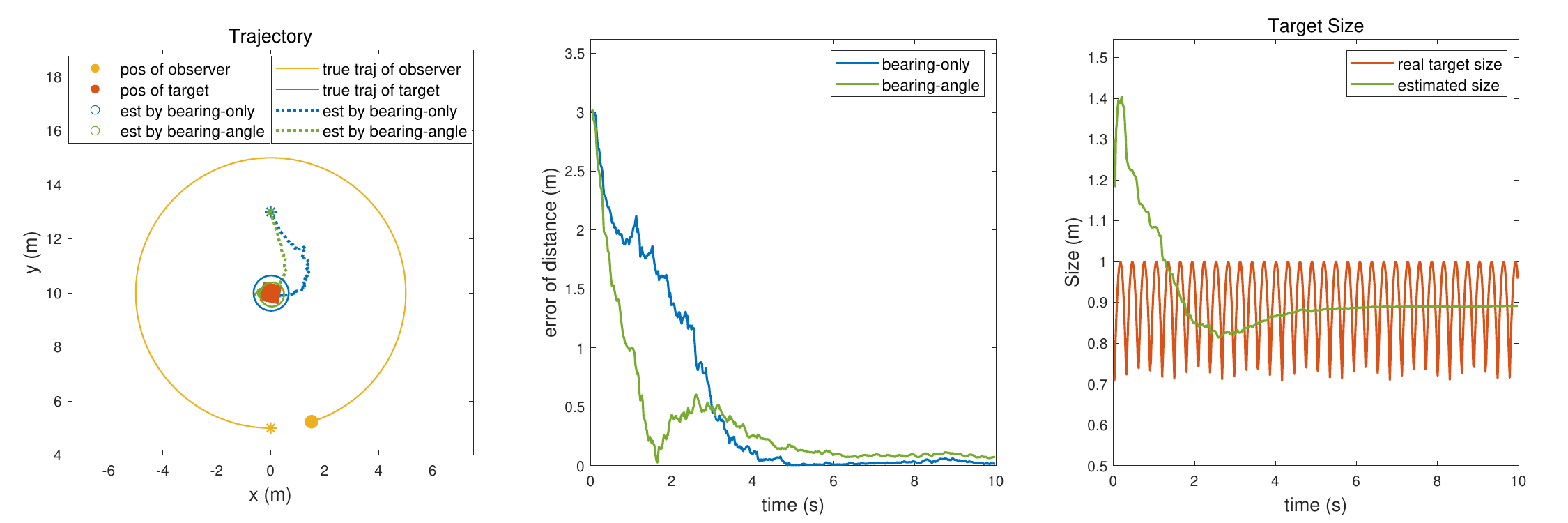}
\label{fig_matlab_4}
}
\hfill
\subfloat[
The observer moves along the bearing vector. The target's spinning speed is $\pi/8$~rad/s.]{
\includegraphics[width=1 \linewidth]{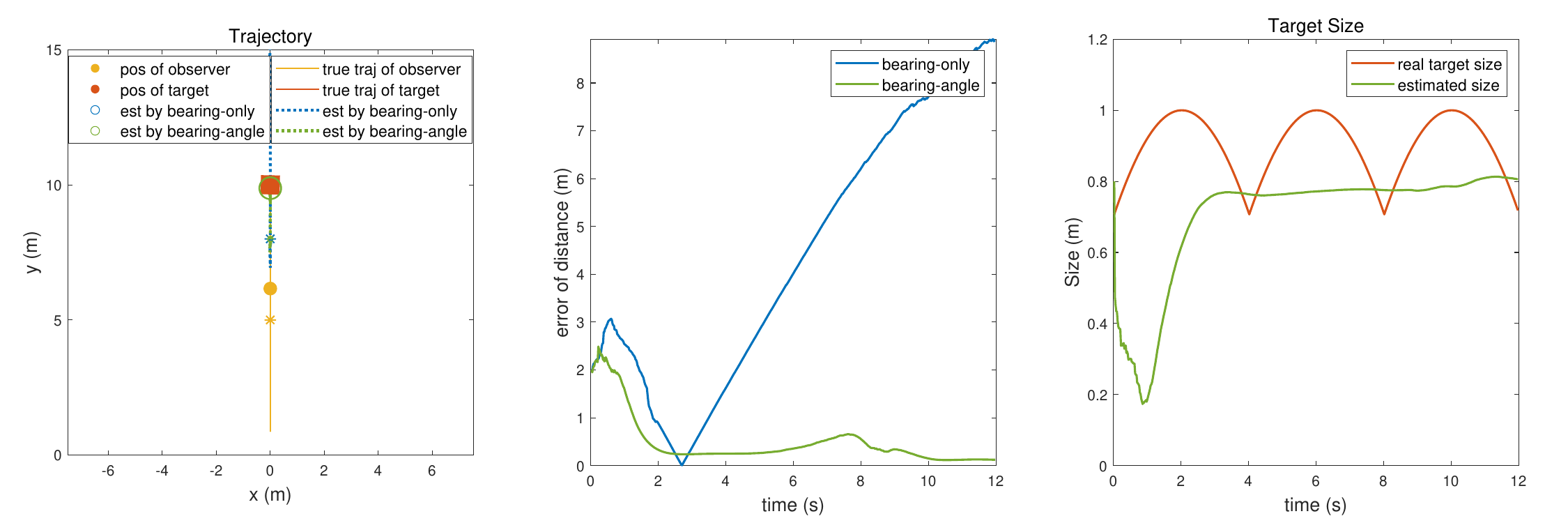}
\label{fig_matlab_pi_8}
}
\caption{Numerical simulation results for time-varying $\ell$.}
\end{figure*}
This section presents a set of numerical simulation results to demonstrate the effectiveness of the proposed bearing-angle approach.

The values of the parameters in two estimators are selected as $\sigma_v=10^{-3}$, $\sigma_l=10^{-4}$, $\sigma_\mu=0.01$, and $\sigma_w=0.01$.
The selection of these values is inspired by the measurement noises obtained in the AirSim simulation and real-world experiments as shown later.
The initial covariance matrix of the estimated states is set to $P(t_0)=0.1I$.
The target is a circle whose diameter is $\ell=1$.
The update rate of the system is $50$~Hz.
In addition, we use the same parameter values across all the simulation examples to verify the robustness of the algorithm.
Better performances can be achieved if the parameters are well-tuned for specific scenarios.
We perform $N_x=100$ Monte Carlo simulations for each scenario.

We use the normalized-estimation error squared (NEES) \citep{bar1998estimation} to analyze the consistency of the estimation algorithms.
In particular, the value of the average NEES is

\begin{align}
    \bar{\epsilon}_{\text{NEES}}=\dfrac{1}{N_x}\sum_{i=1}^{N_x}(x-\hat{x}_i)^\mathrm{T}P_i^{-1}(x-\hat{x}_i),
    \label{eq_nees}
\end{align}
where $\hat{x}_i$ is the estimated states in the $i$th simulation, and $P_i$ is the covariance matrix obtained from the estimator in the $i$th simulation.

Finally, image acquisition and visual detection are not considered in these numerical simulation scenarios. They will be considered in Section~\ref{sec_airsim_simulation} and Section~\ref{sec_real_world_experimental_validation}.

\subsection{Scenario 1: Circular motion around the target}
In the first scenario, the target is stationary and located at $\my{p}_T=[0, 10]^\mathrm{T}$.
The observer moves on a circle centered at the target with the speed of $3$~m/s (see Fig.~\ref{fig_matlab_1}).
The radius of the circle is $5$~m.
The initial estimates are $\hat{p}_o(t_0) = [0, 13]^\mathrm{T}$, $\hat{v_o}(t_0)=[0, 0]^\mathrm{T}$, $\hat{\ell}(t_0)=1.6$.
During this process, the bearing vector varies while the angle subtended by the target remains constant.
The angle measurement varies slightly due to the measurement noise.
This scenario is favorable to the conventional bearing-only approach because its observability condition that the target should be viewed from different angles is well satisfied \citep{Li2022}.

Fig.~\ref{fig_matlab_1} shows the estimation results by the two approaches of bearing-only and bearing-angle.
As can be seen, both algorithms perform well.
The convergence of the bearing-angle approach is faster than the bearing-only one, as shown in the middle and right subfigures of Fig.~\ref{fig_matlab_1}, due to the additional angle measurement.
The bearing-angle approach can successfully estimate the size of the target as shown in the right subfigure of Fig.~\ref{fig_matlab_1}.

\subsection{Scenario 2: Straight motion towards and backwards the target repeatedly}
In the second scenario, the target is also stationary but the observer moves along a straight line towards and backwards the target repeatedly (Fig.~\ref{fig_matlab_2}).
During this process, the bearing vector remains constant while the angle varies.
This scenario is most challenging for the bearing-only approach because its observability condition is not fulfilled.

In this simulation scenario, the target is stationary and located at $\my{p}_T(t_0)=[0, 10]^\mathrm{T}$.
The observer moves along a straight line towards and backwards the target with a constant acceleration of $-2$~$\text{m/s}^2$. The initial conditions are $v_o(t_0)=[0, 4]^\mathrm{T}$ and $\my{p}_o (t_0)= [0,5]^\mathrm{T}$.
The initial estimates are $\hat{p}_o(t_0) = [0, 8]^\mathrm{T}$, $\hat{v_o}(t_0)=[0, 0]^\mathrm{T}$, $\hat{\ell}(t_0)=0.8$.
In this scenario, the true bearing of the target relative to the observer remains unchanged though the bearing measurement may vary slightly due to the measurement noise.

Fig.~\ref{fig_matlab_2} shows the estimation results of the bearing-only and bearing-angle approaches.
As can be seen, the bearing-only approach diverges since its observability condition is not satisfied.
By contrast, the proposed bearing-angle approach converges, and is able to localize the target and estimate its size, which demonstrates the strong observability of the bearing-angle approach.
One may notice that the estimated size and the NEES value get worse first before converging.
This is because the noise level of the angle is set to be constant. Since the angle is small in the beginning, the noise-angle ratio is large, causing a relatively large NEES value.

\begin{figure*}[!t]
\centering
\subfloat[An AirSim simulation experimental scenario.]{
\includegraphics[width=0.5 \linewidth]{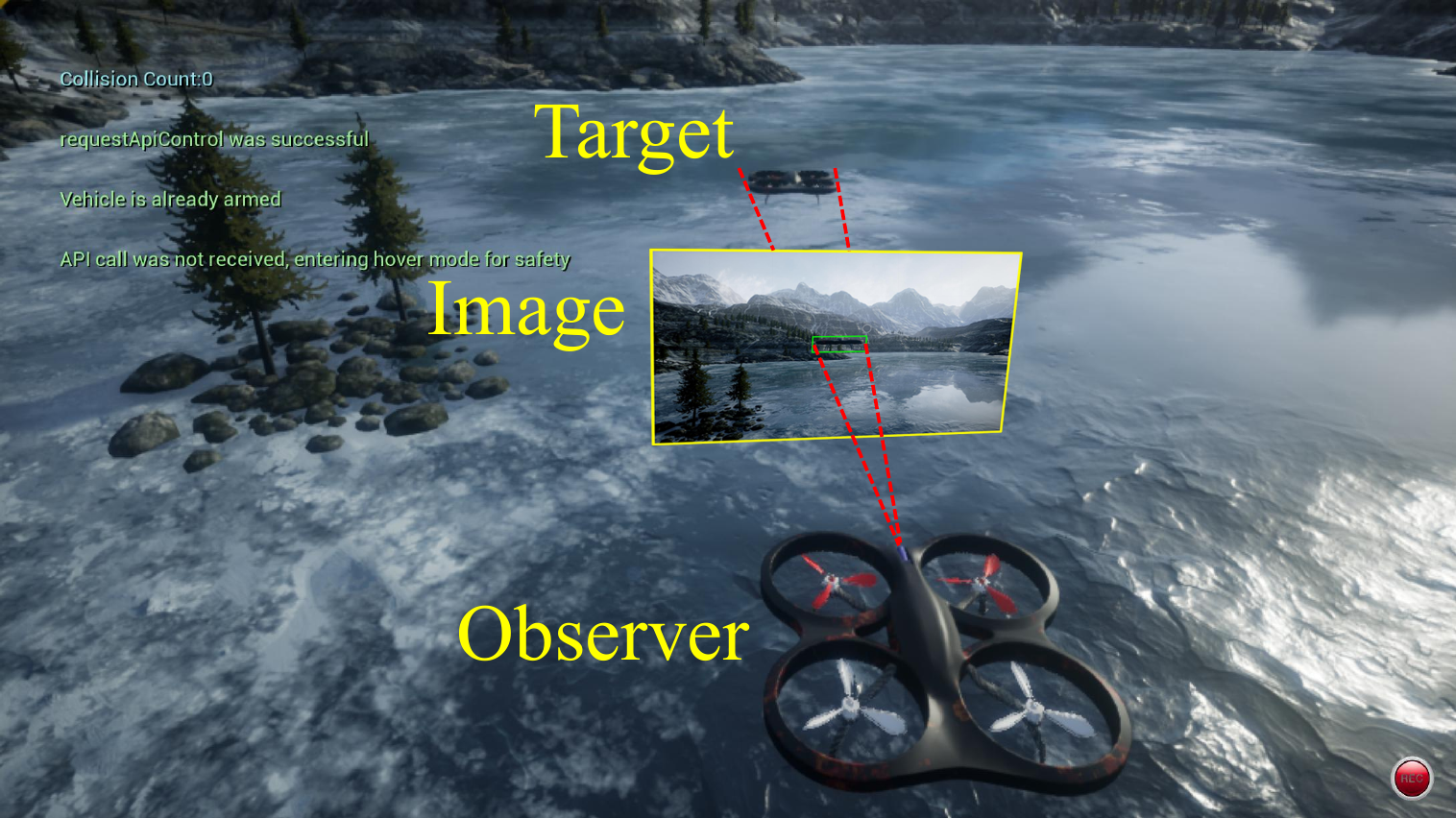}
\label{fig_architecture_airsim}
}
\subfloat[Samples of the dataset collected automatically in AirSim.]{
\includegraphics[width=0.5 \linewidth]{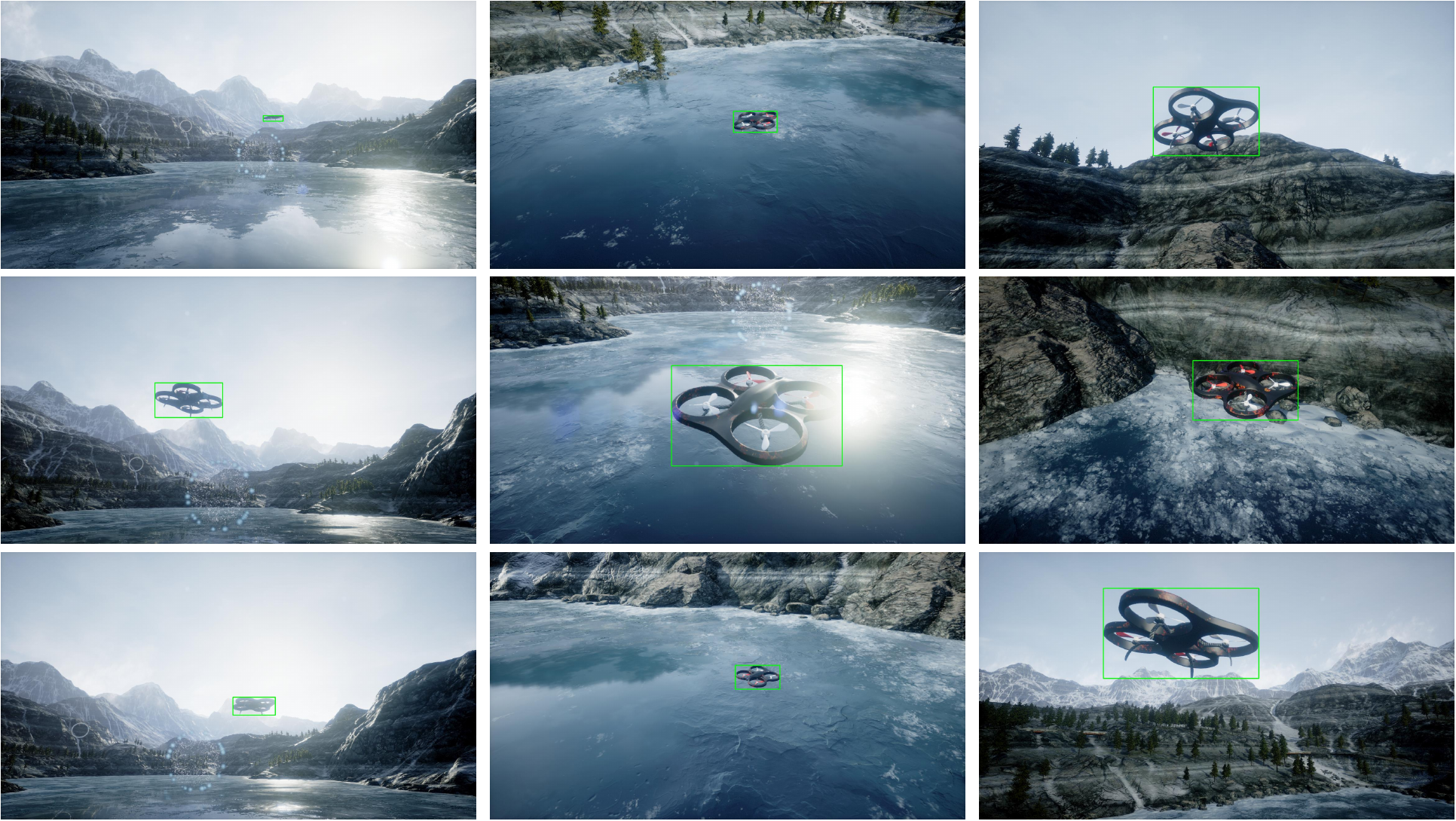}
\label{fig_airsim_dataset}
}
\caption{The setup of the AirSim simulation experiments.}
\end{figure*}

\begin{figure*}[!t]
	\centering
	\includegraphics[width=1\linewidth]{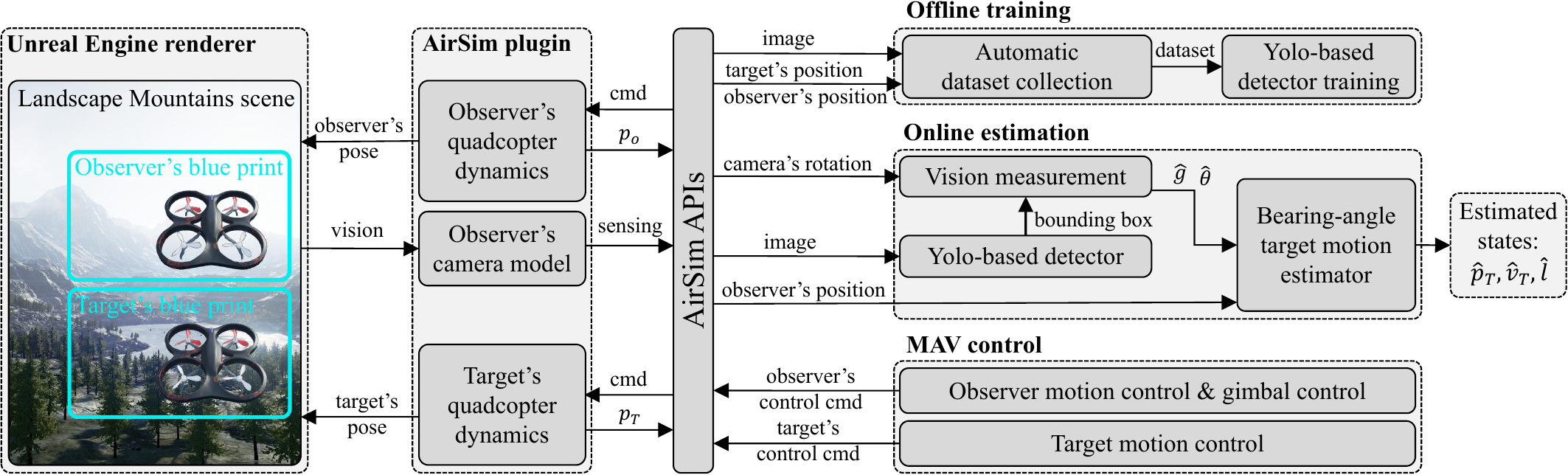}
	\caption{The software architecture of the AirSim simulation system. AirSim is a plugin for Unreal Engine. Three programmed modules (Offline training, Online estimation, and MAV control) communicate with the AirSim plugin through APIs.}
	\label{fig_box_airsim}
\end{figure*}

\subsection{Scenario 3: Approaching the target by a guidance law}
The third scenario is more complex than the first two. Here, the target moves with a constant velocity where the observer is controlled by a proportional navigation guidance (PNG) law to approach the target (Fig.~\ref{fig_matlab_3}).
During this process, both the bearing and angle vary.
This scenario is also challenging for the bearing-only approach because its observability is weak due to the fact that the lateral motion of the observer is small.
Many researchers have studied how to add extra control commands to the PNG to enhance the observability based on the bearing-only approach \citep{Song1996, Seo2015, Lee2015}.

In this simulation scenario, the target moves along a straight line with a constant velocity $\my{v}_T=[1/\sqrt{2}, 1/\sqrt{2}]^\mathrm{T}$.
The observer's velocity magnitude is constantly $3$~m/s while the velocity direction is controlled by a PNG law.
The navigation gain of the PNG law is selected as one.
The initial estimates of the target's states are the same as Scenario~1.
The simulation stops just before the observer collides with the target.

Fig.~\ref{fig_matlab_3} shows the estimation results by the bearing-only and bearing-angle approaches. As can be seen, the bearing-angle algorithm successfully converges before the collision occurs, but the bearing-only algorithm fails to estimate the target's states due to its weak observability.
This simulation example demonstrates that the bearing-angle algorithm can be used directly in the guidance scenario without extra maneuvers required by the bearing-only approach \citep{Song1996, Seo2015, Lee2015}.

\subsection{Simulation results for time-varying $\ell$}
\label{sec_matlab_varying_ell}
Although $\ell$ is assumed to be invariant, it is meaningful to challenge the proposed bearing-angle approach by considering time-varying $\ell$.
We will see through simulation examples that the bearing-angle approach is still effective when $\ell$ varies slowly. It becomes unstable when $\ell$ varies rapidly since the assumption of invariant $\ell$ is severely invalid.

Suppose that the target object has a square shape. Then, $\ell$ varies when the object is observed from different viewing angles or the object spins.
Fig.~\ref{fig_matlab_4} shows a scenario where the observer moves around the target, whose spinning speed is $2\pi$~rad/s.
The red curve in the right subfigure represents the true value of $\ell$, which varies rapidly.
As can be seen, the bearing-angle algorithm works effectively though there is a small estimation bias.
Fig.~\ref{fig_matlab_pi_8} shows a scenario where the observer moves along the bearing vector. The spinning speed of the target object is $\pi/8$~rad/s.
As can be seen, the bearing-only approach diverges due to the lack of observability. The bearing-angle algorithm can still converge since $\ell$ varies slowly.
When we further increase the spinning speed of the target, the bearing-angle algorithm will also diverge because the algorithm cannot distinguish whether the change of $\theta$ is caused by the change of $\ell$ or the change of $r$.

\section{AirSim Simulation Results}\label{sec_airsim_simulation}
In this section, we show simulation results under a more realistic setup. In particular, the simulation is based on AirSim, a simulator that can provide high-quality visual simulation \citep{Shah2017}. Nonlinear MAV dynamics and control are also considered.

\begin{figure*}[!t]
\centering
\subfloat[The target MAV hovers stationarily, while the observer MAV approaches the target MAV under the control of \eqref{eq_tracking_control}.]{
\includegraphics[width=1 \linewidth]{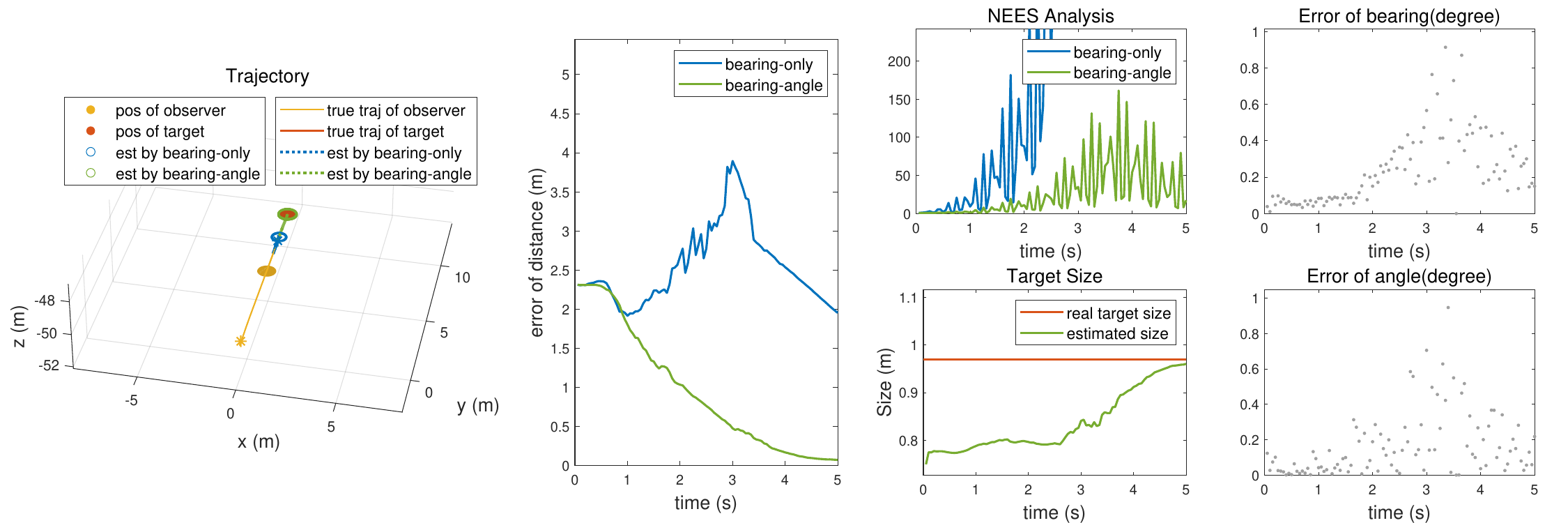}
\label{fig_airsim_1}
}
\hfill
\subfloat[The target MAV moves with a constant velocity, while the observer MAV follows the target MAV under the control of \eqref{eq_tracking_control}.]{
\includegraphics[width=1 \linewidth]{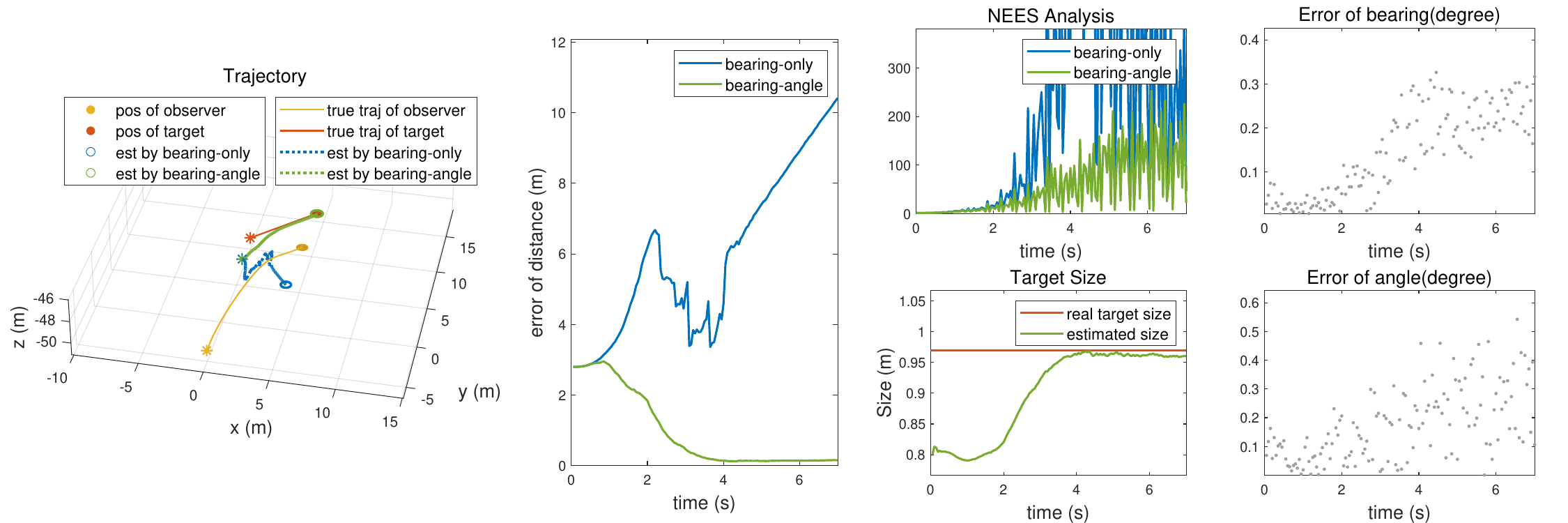}
\label{fig_airsim_2}
}
\caption{AirSim simulation results in the approaching and following scenarios.}
\label{fig_airsim}
\end{figure*}

\subsection{Simulation setup}

\begin{figure*}[!t]
\centering
\subfloat[Estimation results when $\sigma_l=10^{-4}$ and the other parameters are the same as those in Section~\ref{sec_sim_res_for_tracking}.]{
\includegraphics[width=1 \linewidth]{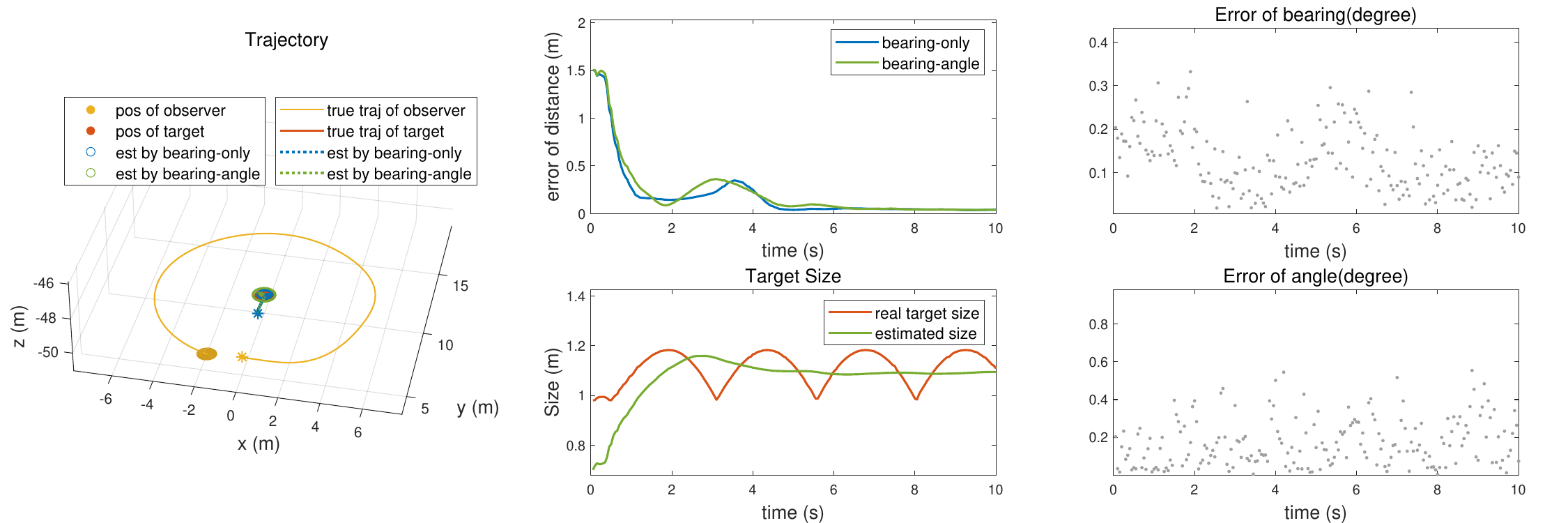}
\label{fig_airsim_6_1}
}
\hfill
\subfloat[Estimation results when $\sigma_l=0.01$ and the other parameters are the same as those in Section~\ref{sec_sim_res_for_tracking}.]{
\includegraphics[width=1 \linewidth]{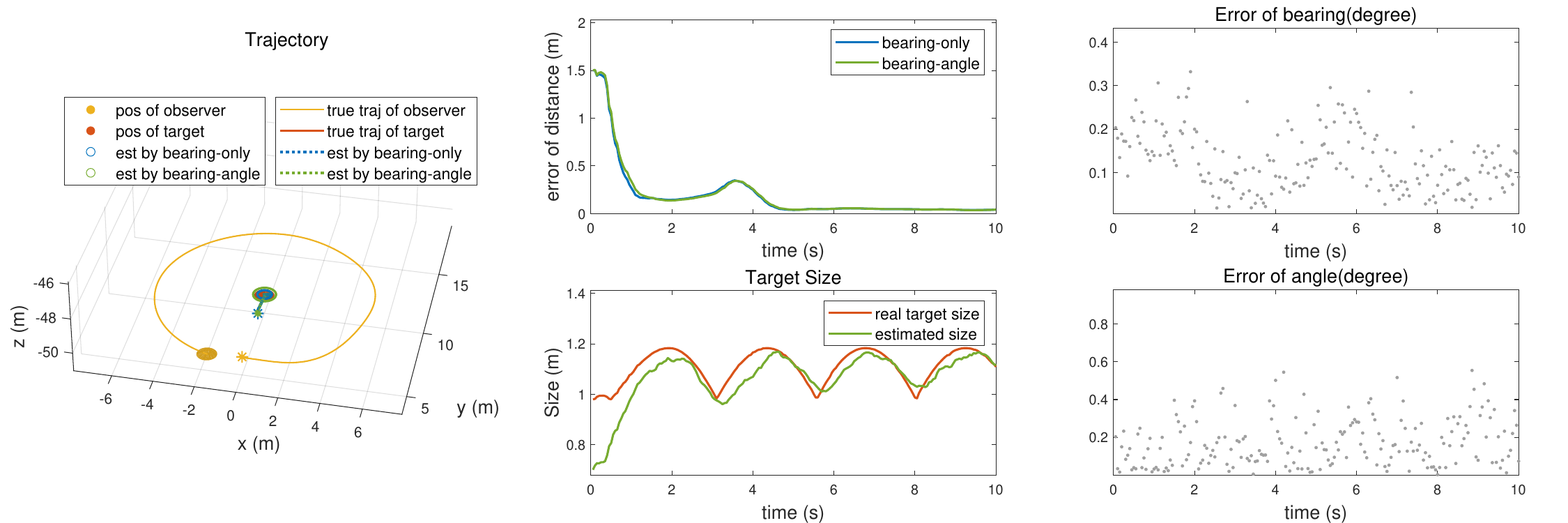}
\label{fig_airsim_6_2}
}
\caption{AirSim simulation results in the circular motion scenario where $\ell$ varies.}
\label{fig_airsim_6}
\end{figure*}

Fig.~\ref{fig_architecture_airsim} shows an AirSim simulation scenario.
As can be seen, there are two flying quadcopter MAVs. The observer MAV can capture images of the target MAV using its simulated onboard camera.
A simple gimbal camera controller is implemented so that the target MAV is always located inside the field of view of the camera.
The visual environment used in the simulation is called Landscape Mountains, which includes realistic mountains, lakes, trees, and roads. Other environments can also be used if needed.

The bearing and angle measurements are obtained from the bounding boxes generated by a Yolo-based detection algorithm.
A tiny-YOLO v4 network \citep{Bochkovskiy2020} is trained to detect the target MAV in the images. Although the visual detector can be replaced by other state-of-the-art ones, the tiny-YOLO v4 network is already sufficient to verify our proposed approach.
The architecture of the entire simulation system is shown in Fig.~\ref{fig_box_airsim}.
The system consists of the modules of automatic image dataset collection, Yolo-based target detection, gimbal camera control, nonlinear quadcopter dynamics, and quadcopter flight control.
The quadcopter dynamics and flight control used in the simulation are similar to \citep{Meier2011, Shah2017} and omitted here due to space limitation.
The quadcopter's physical size varies slightly when viewed from different directions, although it is assumed to be invariant.
All of these factors make the Airsim simulation more realistic and challenging.

\subsection{Automatic dataset collection}
To train the Yolo-based detector, we developed a module to automatically collect an image dataset.
This module has some advantages.
First, it is efficient. More than ten thousand labeled images can be collected automatically in 24 hours.
Second, it is flexible.
It can acquire images with random target's positions, random target's attitudes, random camera's view angles, and random background scenes.
These images are beneficial to achieve a good generalization ability of the detector.
Third, the image labels are of high quality. Since the ground truth of the target's image is known in the simulation, the generated bounding box is tight.
The collected dataset contains 17,000 labeled images (see Fig.~\ref{fig_airsim_dataset}).
The resolution of the images is $1536\times 864$ pixels.
The simulation system was deployed on a Dell Precision 7920 Tower Workstation with two NVIDIA Quadro GV100 graphic cards.
Since the dataset is sufficient and high-quality, the detection can achieve the accuracy of mAP=99.5\%.

\subsection{Scenario 1: Approaching and following the target}
\label{sec_sim_res_for_tracking}
We first consider the scenarios where the observer MAV approaches or follows a target MAV.
These scenarios widely exist in practical applications such as aerial target pursuit.

We show two simulation examples in Fig.~\ref{fig_airsim_1} and Fig.~\ref{fig_airsim_2}, respectively.
In both examples, the observer is controlled by a controller so that it can approach the target and maintain a desired separation. In particular, the controller is
\begin{align}
\label{eq_tracking_control}
v_o^\text{cmd}(t)&=v_T(t)+k^\text{track}\dfrac{r^2(t)-r_d^2}{r^2(t)}g(t),
\end{align}
where $v_o^\text{cmd}(t)$ is the velocity command of the observer MAV, $k^\text{track}=3$ is the control gain, and $r_d=3$ is the desired separation.
The magnitude of the observer's velocity is bounded from above by $3$~m/s.
It should be noted that \eqref{eq_tracking_control} relies on the true position and velocity of the target MAV in the simulation. Therefore, the data is collected first and then processed offline so that we can compare the performances of the bearing-only and bearing-angle approaches.

In the first example, the target MAV hovers constantly at $p_T(t_0)=[0, 10, 10]^\mathrm{T}$.
The observer MAV moves along a straight line toward the target with a decreasing velocity command.
Since the bearing of the target MAV remains the same, this example is challenging for the bearing-only approach.
As shown in Fig.~\ref{fig_airsim_1}, the bearing-only approach fails to converge while the bearing-angle approach can successfully estimate the target's motion.

In the second example, the target MAV moves with a constant velocity of $v_T=[1/\sqrt{2}, 1/\sqrt{2}, 0]^\mathrm{T}$.
The trajectory of the observer MAV under the control of \eqref{eq_tracking_control} is still close to (though not strictly) a straight line. As a result, the observability is weak by the bearing-only approach.
As shown in Fig.~\ref{fig_airsim_2}, the bearing-angle approach successfully converges while the bearing-only one fails.
It is notable that $\ell$ is invariant in the first example and varies slowly in the second example.

It is worth mentioning that the detection results used in the estimation algorithms are obtained from the Yolo-based estimator.
The ground truth obtained from AirSim is only used to calculate the errors of measurements, as shown in the right figures of Figs.~\ref{fig_airsim_1} and \ref{fig_airsim_2}.
It is not surprising that the measurement noises are not strictly Gaussian since the 2D bounding box is generated by a deep learning vision algorithm. It is noticed that the noises are inversely correlated to the observer-target range.
This is reasonable because, when the target is close to the camera and hence its image is large, the center point and the size of the bounding box usually vary for a few pixels.

The NEES values are also shown in Fig.~\ref{fig_airsim}.
As can be seen, the NEES value of the bearing-only approach diverges. The NEES value of the bearing-angle approach oscillates and converges slowly. The reasons are analyzed as follows. Compared to the Matlab-based numerical simulation, the visual measurements here are generated by deep learning algorithms, and the measurement noises are non-Gaussian. The non-Gaussian noises propagate into $P$ in \eqref{eq_nees} since the calculation of $P$ relies on noisy measurements. The noises may also cause an estimation bias that can further aggravate the NEES error. Moreover, although the system is observable in the two simulation examples, the observability is relatively weak compared to the case where the observer moves surrounding the target. As a result, the matrix $P$ may not be able to perfectly describe the estimation accuracy. These elements may jointly cause the convergence behavior of the NEES values shown in Fig.~\ref{fig_airsim}.

\subsection{Scenario 2: Circular motion and varying $\ell$}
\label{sec_sim_res_circular_scenario}

We next examine a case where $\ell$ is time-varying.
In particular, suppose a target quadcopter MAV hovers constantly at $p_T(t_0)=[0, 10, 10]^\mathrm{T}$.
The observer MAV moves on a circle centered at the target (Fig.~\ref{fig_airsim_6_1}).
Since the target quadcopter MAV has a square shape from the top view, its size $\ell$ is time-varying when viewed from side angles (see the red curves in the middle subfigure of Fig.~\ref{fig_airsim_6_1}).

We show two simulation examples in Fig.~\ref{fig_airsim_6_1} and Fig.~\ref{fig_airsim_6_2}, respectively.
The two simulation examples share the same measurement data but different values of $\sigma_\ell$.
Moreover, the other parameters are the same as those in Section~\ref{sec_sim_res_for_tracking}.

\begin{figure*}[!t]
\centering
\subfloat[Experimental setup]{
\includegraphics[width=0.48 \linewidth]{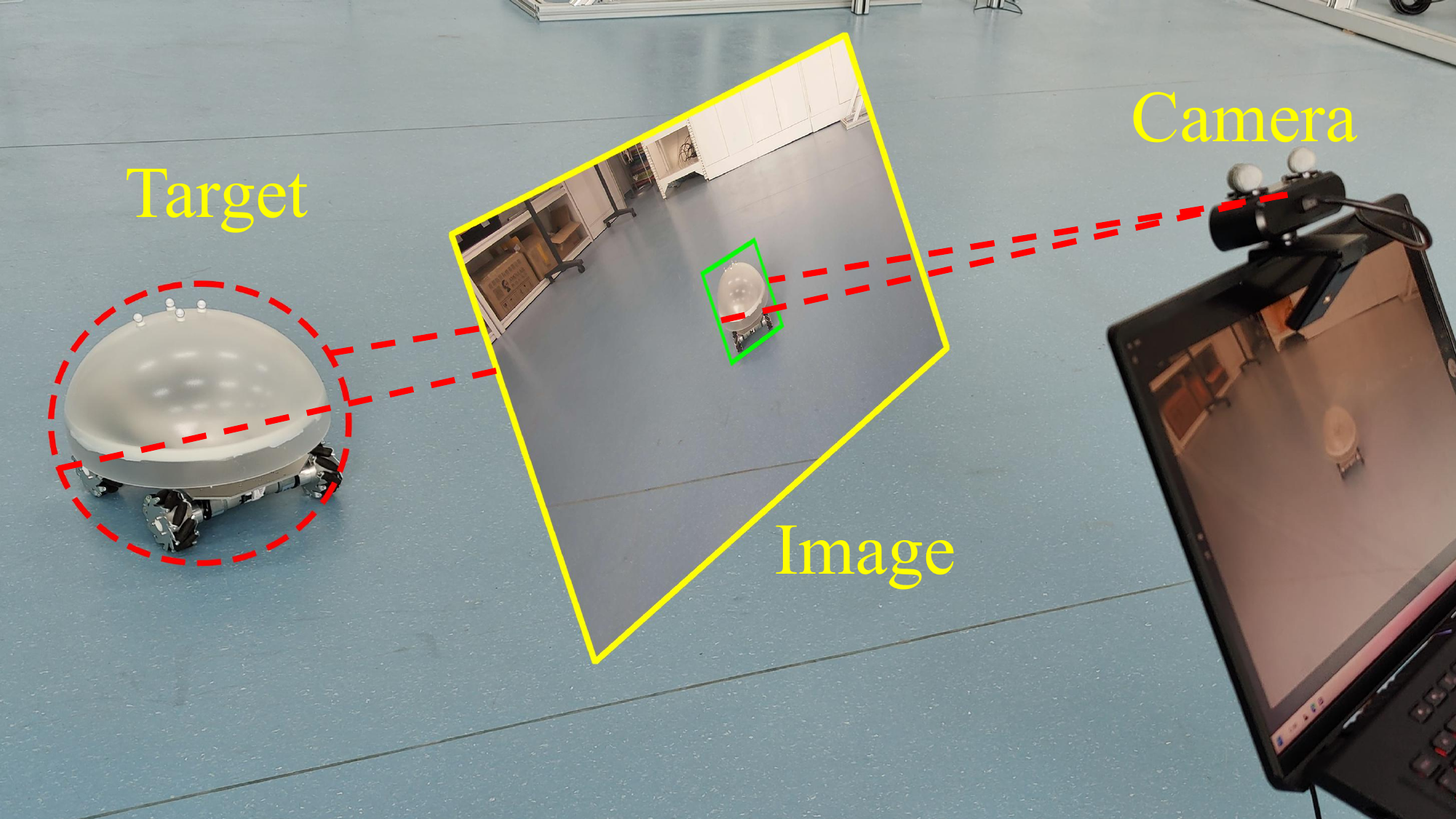}
\label{fig_architecture_indoor}
}
\subfloat[Samples in the dataset]{
\includegraphics[width=0.48 \linewidth]{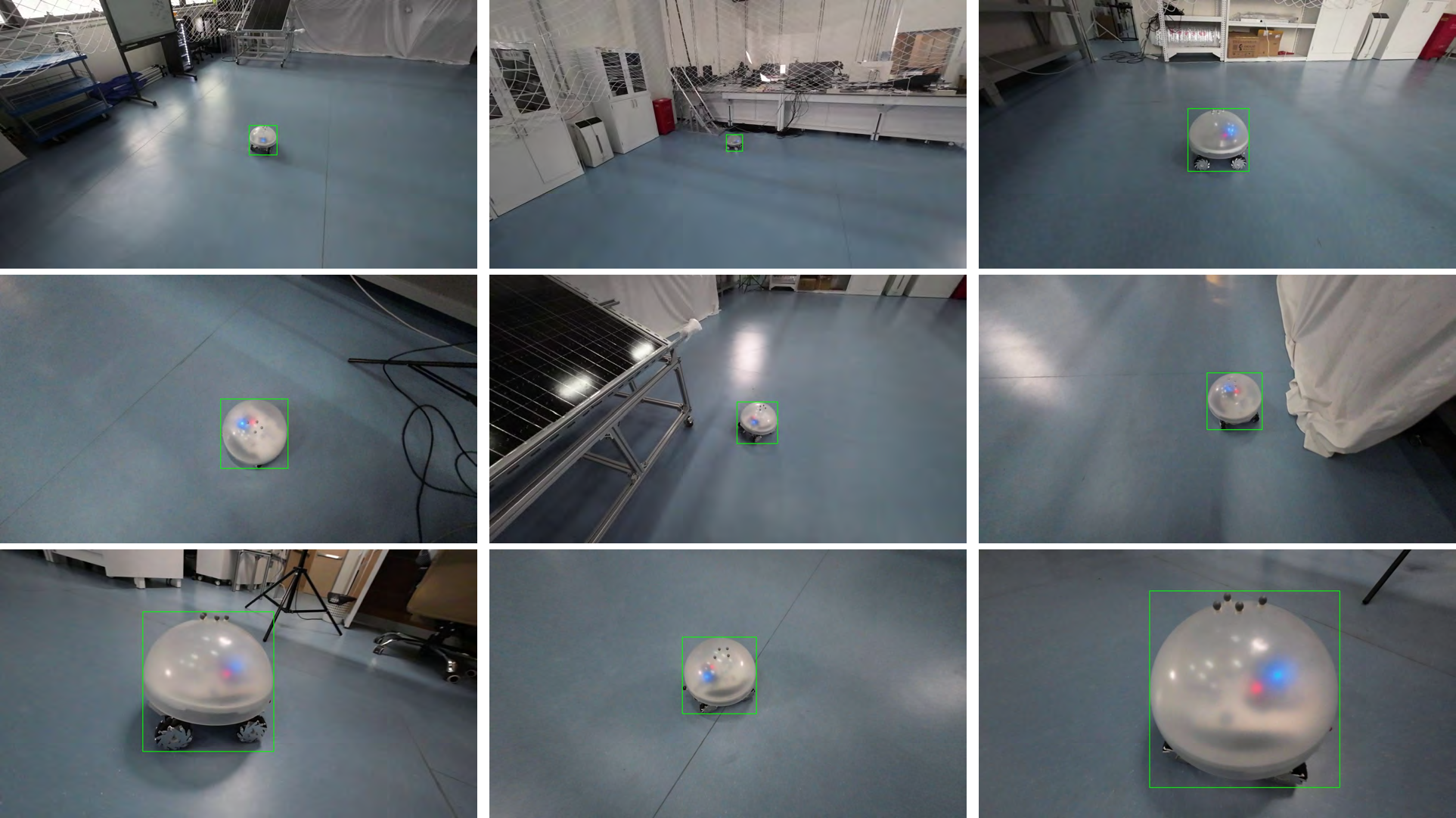}
\label{fig_car_dataset}
}
\caption{The setup of the experiments based on a hand-held camera.}
\end{figure*}

In the first simulation example, $\sigma_\ell$ is set to be a small value: $\sigma_\ell=10^{-4}$.
Its interpretation is that $\ell$ is treated as invariant during the process.
In this case, the performance of the bearing-angle approach is almost the same as the bearing-only one as shown in Fig.~\ref{fig_airsim_6_1}.
Since $\ell$ is treated to be invariant, the estimated value $\hat{\ell}$ converges to a constant which is the mean value of the time-varying $\ell$.

In the second simulation example, the value of $\sigma_\ell$ is larger than the first example: $\sigma_\ell = 0.01$.
Its interpretation is that $\ell$ is believed to be time-varying during the process.
In this case, the performance of the bearing-angle approach is still almost the same as the bearing-only one.
Moreover, since $\sigma_\ell$ is large, the bearing-angle approach can successfully estimate the true time-varying value of $\ell$.

In summary, in the case where $\ell$ varies slowly, the bearing-angle approach would degenerate to the bearing-only one.
The fundamental reason is that the extra information embedded in the angle measurement is used to estimate the time-varying $\ell$ rather than improving the observability of the target's motion.

\begin{figure*}[!t]
\centering
\subfloat[Case 1: The observer moves around the target. Both the bearing-only and bearing-angle approaches work well.]{
\includegraphics[width=1 \linewidth]{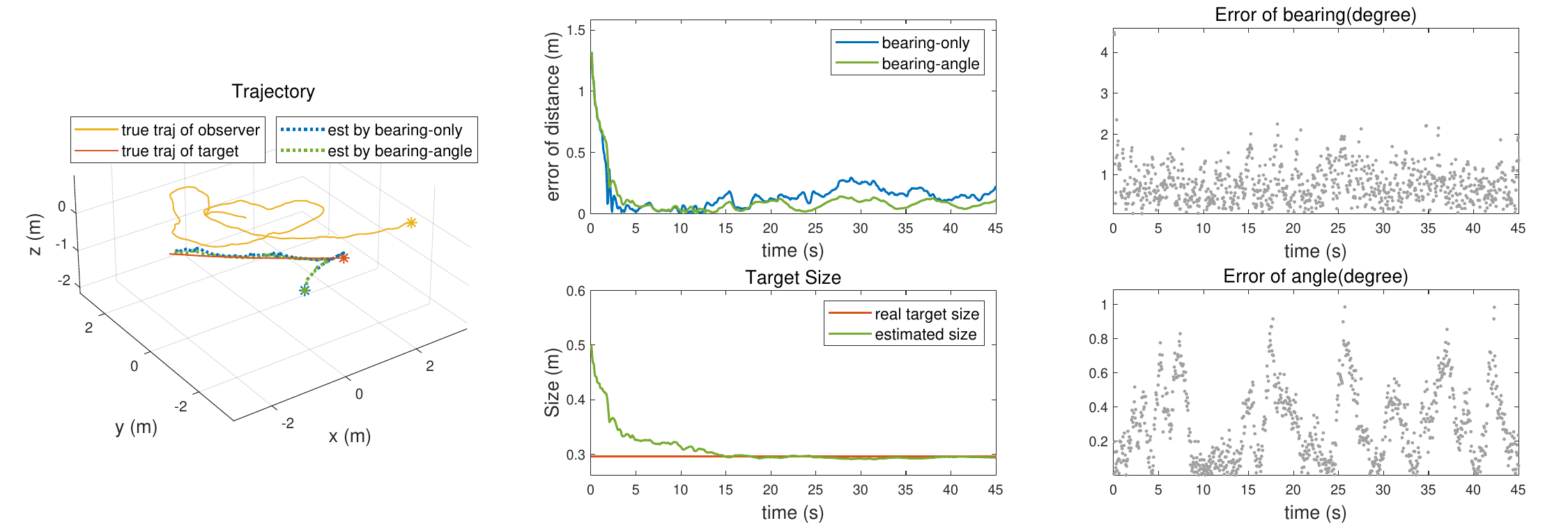}
\label{fig_indoor_9}
}
\hfill
\subfloat[Case 2: The observer moves close or far from the target periodically. The bearing-angle approach performs effectively, but the bearing-only approach works unstably.]{
\includegraphics[width=1 \linewidth]{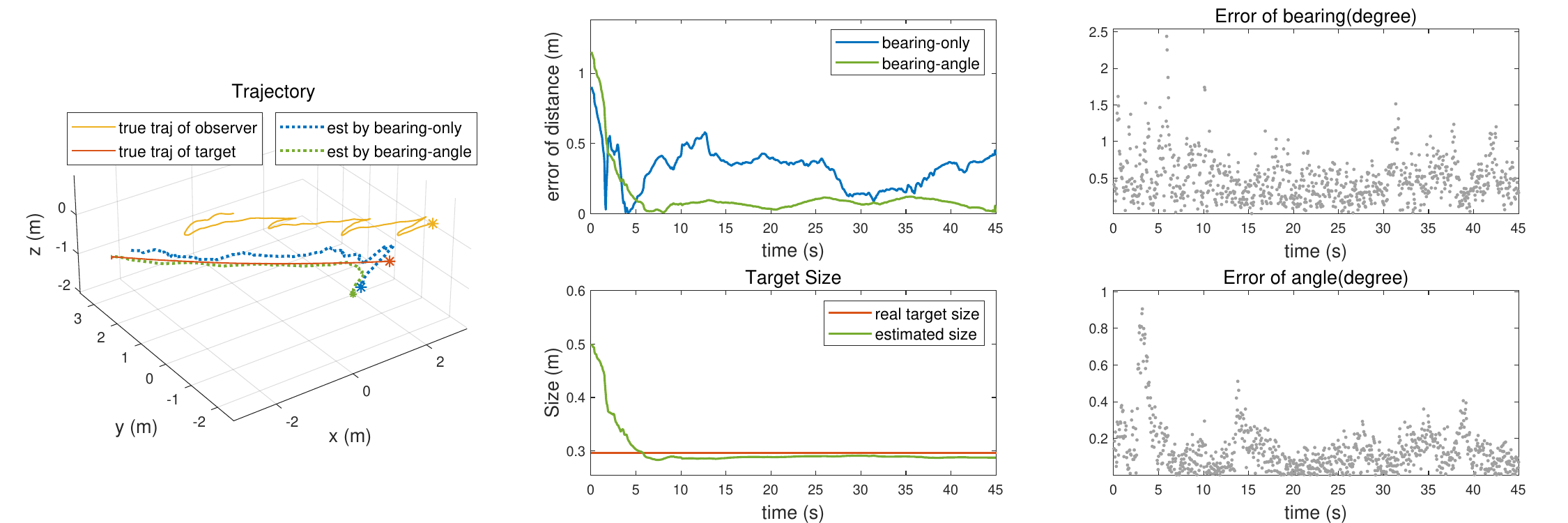}
\label{fig_indoor_6}
}
\caption{Experimental results based on a hand-held camera.}
\end{figure*}

\section{Real-World Experimental Results}\label{sec_real_world_experimental_validation}

In this section, two sets of real-world experiments are presented to further verify the effectiveness of the approach. The first is based on a hand-held camera and a ground robot.
The second is based on two quadcopter MAVs. The second experimental scenario is motivated by aerial target pursuit tasks.

\subsection{Experiment 1: Hand-held camera}

The experimental setup is shown in Fig.~\ref{fig_architecture_indoor}.
The observer is a hand-held camera (Hik Vision DS-E14S) connected to a laptop. The camera's intrinsic parameters are calibrated beforehand.
The robot built on Mecanum wheels can move in any direction on the ground under velocity control.
The ground truth of the states of the camera and the robot are provided by a Vicon indoor motion capture system.
The key experimental specifications are listed in Table~\ref{table_indoor_hardware}.

A dataset of 5,514 images was collected and used to train a tiny-YOLO v4 network to detect the target robot (see Fig.~\ref{fig_car_dataset}).
The detection precision of the trained network is mAP=99.8\%.
In the experiment, the target robot is commanded to move with a constant velocity.
In the meantime, a person holding the camera moves along some trajectories.
Two different cases are studied. In both of the cases, the target robot moves with a constant velocity $v_T=[-0.1, 0.1 ,0]^\mathrm{T}$.
The noises of the measurements are calculated based on the ground truth provided by the Vicon system. The noises are shown in the right subfigures of Fig.~\ref{fig_indoor_9} and Fig.~\ref{fig_indoor_6}.

In the first case, the camera is held about 1.5 meters above the ground and moves around the target robot. In this case, the bearing vector varies sufficiently and hence the observability conditions for the bearing-only and bearing-angle approaches are both well satisfied. As shown in Fig.~\ref{fig_indoor_9}, both approaches perform well in this case while the bearing-angle approach performs slightly better than the bearing-only one.

In the second case, the camera moves along the trajectory of the robot by getting close or far from it periodically.
In this case, the angle varies significantly, but the bearing does not.
Without surprise, the bearing-only approach performs poorly in this case due to weak observability  (Fig.~\ref{fig_indoor_6}).
By contrast, the bearing-angle approach can perform stably due to its enhanced observability.
\begin{table}
\begin{center}
\caption{Key specifications of the \emph{indoor} hardware system.}
\label{table_indoor_hardware}
\begin{tabular}{l|lll}
\hline
 & Parameter & Value & Unit \\
\hline
\multirow{2}*{Camera} & Resolution & 640$\times$ 480 & pixel\\
~ & Max frequency & 30 &fps\\
\hline
\multirow{2}*{Robot} & Max speed & 1& m/s \\
~ & Diameter size & 295 & mm\\
\hline
\multirow{2}*{Vicon} & Localization accuracy & 1 & mm \\
~ & Max frequency & 100 & Hz\\
\hline
\end{tabular}
\end{center}
\end{table}

\subsection{Experiment 2: MAV-following-MAV}
\label{sec_mav_following_mav}

\begin{table}
\begin{center}
\caption{Key specifications of the \emph{outdoor} hardware system.}
\label{table_M300}
\begin{tabular}{c|lll}
\hline
 & Parameter & Value & Unit \\
\hline
\multirow{5}{*}{\makecell[c]{M300 \\quadcopter}} & Diagonal size & 895 & mm\\
~&Total mass & 7.4 & kg \\
~&Max pitch/roll & 30 & degree \\
~&Max flight time & 30 & minutes\\
\hline
\multirow{2}{*}{RTK} & Accuracy & 1 & cm \\
~& Max frequency & 10 & Hz\\
\hline
\multirow{3}{*}{\makecell[c]{H20 \\ gimbal \& \\ camera} } & Resolution &1920$\times$1080  & pixel\\
~& Frequency & 15 & Hz \\
~ & Max angular rate & 180 & deg/s\\
\hline
\end{tabular}
\end{center}
\end{table}

\begin{figure*}[!t]
\centering
\subfloat[Hardware platforms]{
\includegraphics[width=0.48 \linewidth]{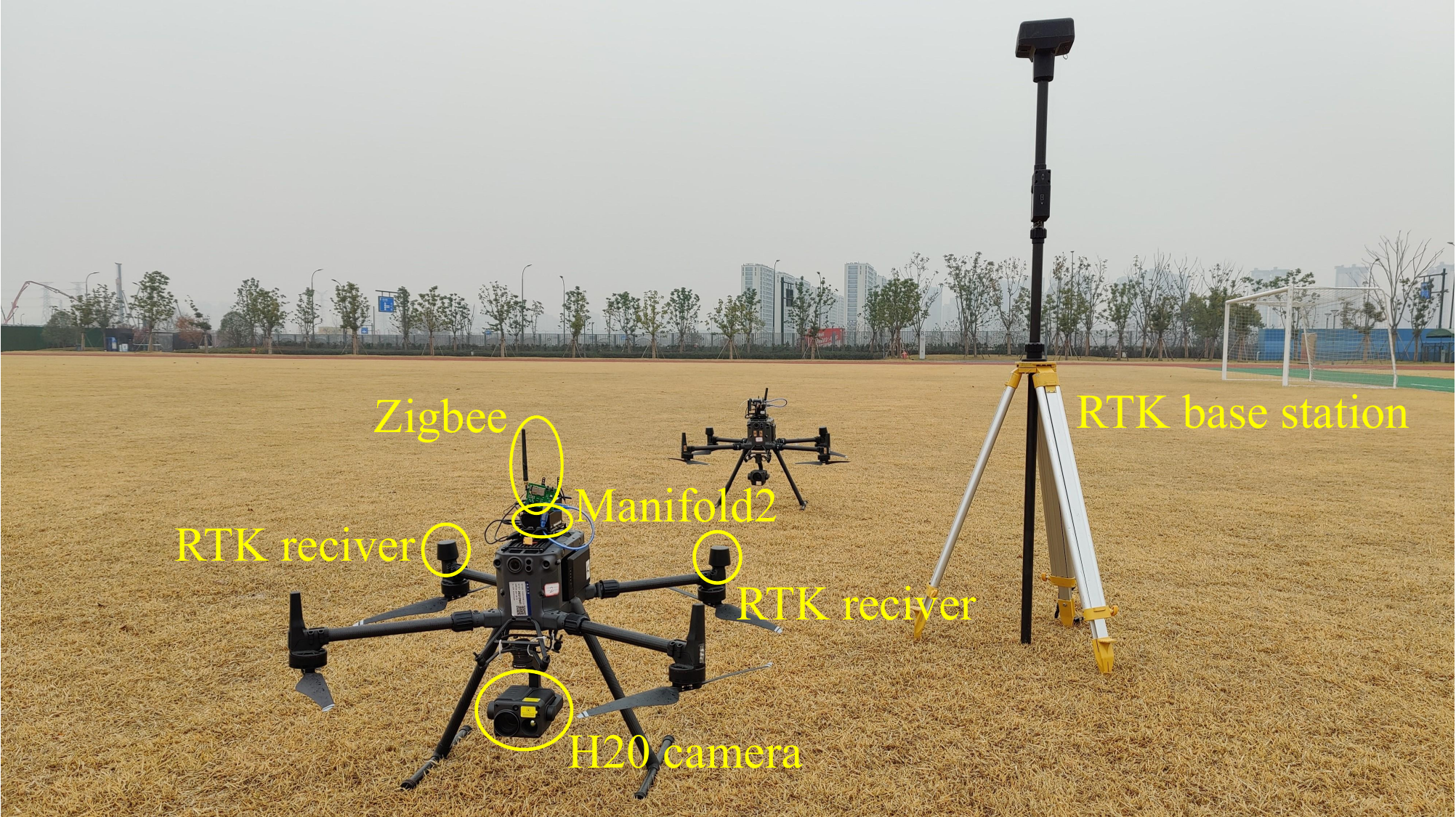}
\label{fig_M300}
}
\subfloat[Samples of the images in the dataset]{
\includegraphics[width=0.48\linewidth]{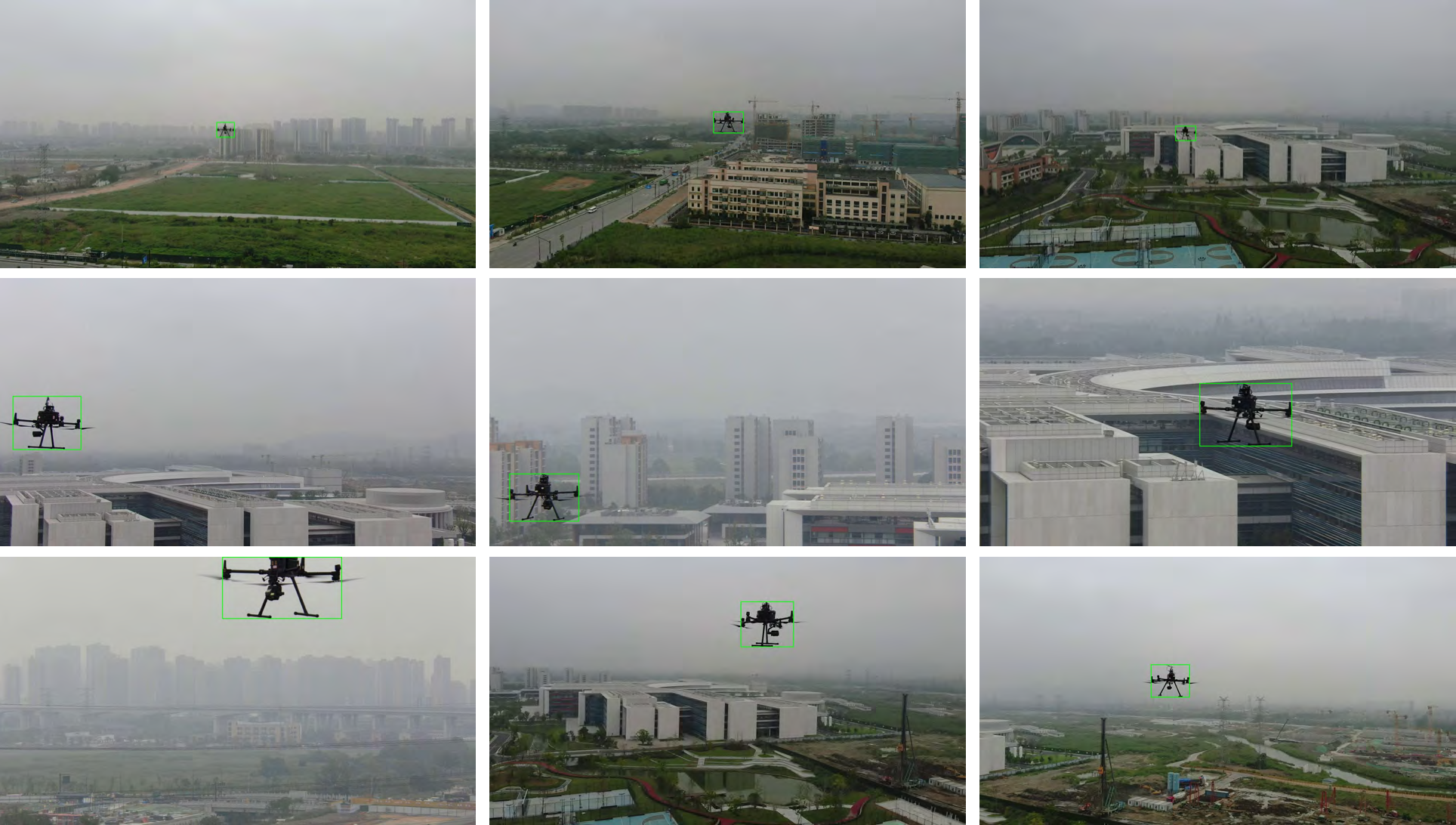}
\label{fig_M300_dataset}
}
\hfill
\subfloat[System architecture]{
\includegraphics[width=0.48 \linewidth]{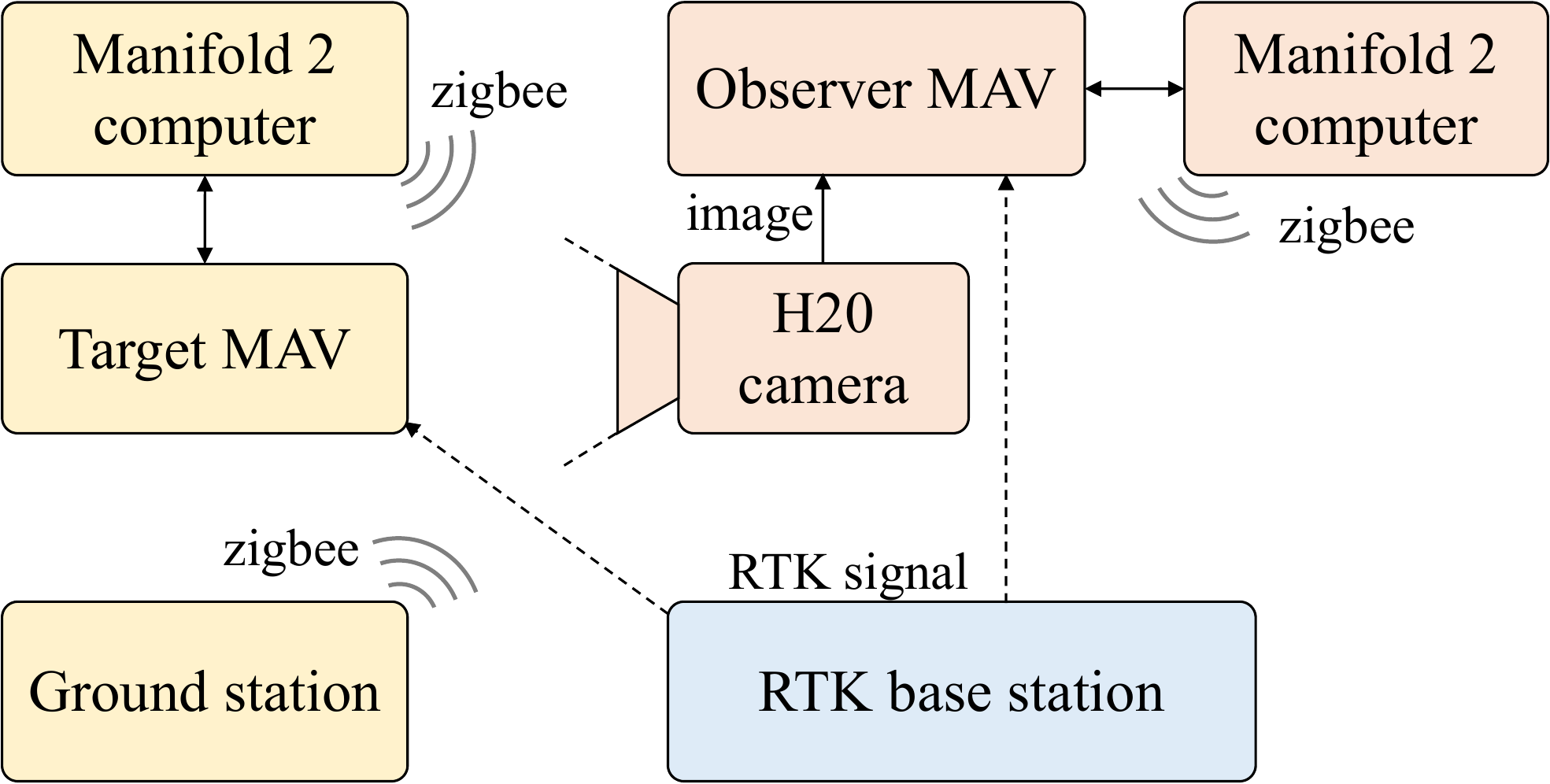}
\label{fig_outdoor_hardware}
}
\caption{The setup of the MAV-following-MAV experiment.}
\end{figure*}

\begin{figure*}[!t]
	\centering
	\includegraphics[width=1\linewidth]{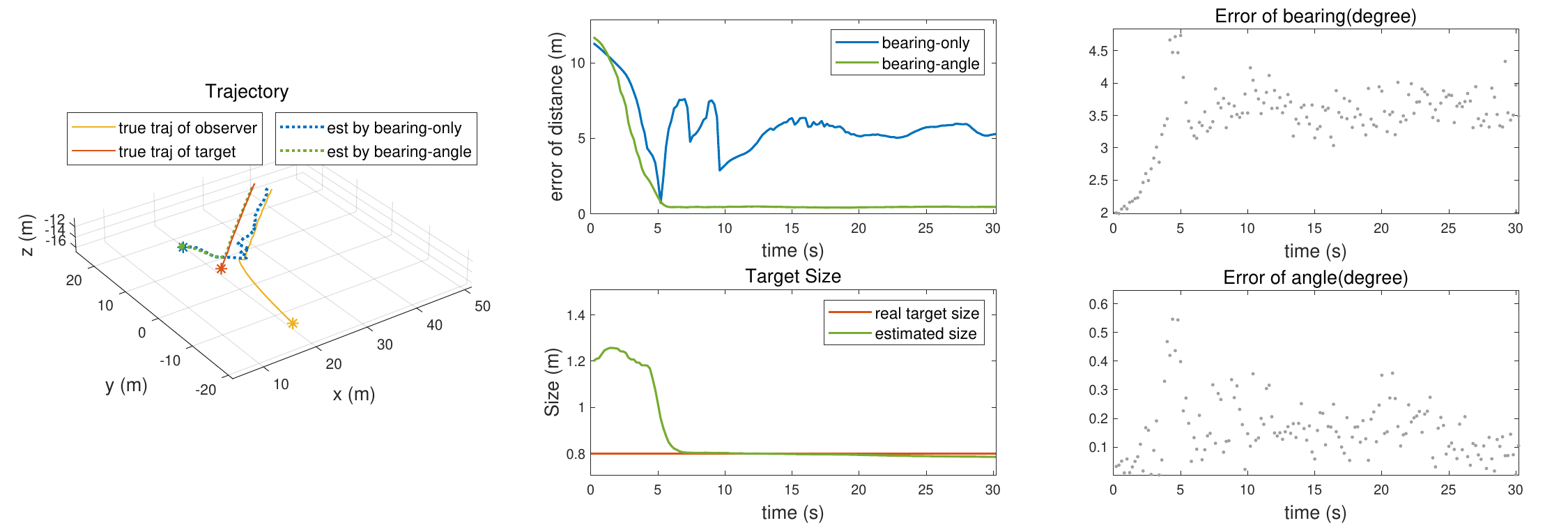}
	\caption{The results of the MAV-following-MAV experiments. The bearing-angle approach performs effectively, but the bearing-only approach works unstably.}
	\label{fig_outdoor_1}
\end{figure*}

Two MAV platforms were developed based on DJI M300 quadcopters (Fig.~\ref{fig_M300}).
The MAV platforms are equipped with RTK GPS modules for accurate self-localization, an H20 camera for visual detection, a Manifold 2G onboard computer for onboard flight control, and a Zigbee module for wireless communication.
Some key specifications of the MAV platforms are listed in Table~\ref{table_M300}.
The structure of the hardware perception and communication system is illustrated in Fig.~\ref{fig_outdoor_hardware}.
The target MAV is also equipped with an RTK GPS module, whose measurements are used as the ground truth to calculate the noises of the visual measurements. The noises are shown in the right subfigure of Fig.~\ref{fig_outdoor_1}.

The experiment consists of two stages: data acquisition and offline data processing.
In the data acquisition stage, the target MAV is commanded to fly with a constant velocity, and the observer MAV is automatically controlled to follow the target MAV to maintain a constant distance from the target.
More specifically, the procedure of the flight experiment is as follows. Initially, the observer MAV is placed about 20 meters behind the target MAV on the ground. Then, the two MAVs take off and fly to the same specified height automatically upon a takeoff command sent from the ground control station.
After they have reached the desired height, all deployed algorithms are activated.
Then, the target MAV moves with a constant velocity of $v_T=[1/\sqrt{2}, 1/\sqrt{2}, 0]^\mathrm{T}$. The observer MAV approaches the target by the controller in \eqref{eq_tracking_control}. It takes the observer MAV about eight seconds to reach the desired relative distance.
Then, the two MAVs fly with the same velocity and remain relatively stationary for another 20 seconds.
Finally, the ground station sends a stop command and the two MAVs return and land automatically.

During the flight, the gimbal camera of the observer MAV is automatically controlled so that the target MAV is maintained in the field of view.
It is noted that the control of the gimbal camera and the observer MAV is not based on the visual detection results. Instead, the control is based on the measurements provided by the RKT GPS and inter-MAV wireless communication. In this way, we can analyze the image and flight data offline and compare the performance of the two approaches of bearing-angle and bearing-only.
The acquired images and flight data are saved on the onboard computer during the flight. A dataset of 5,341 images was collected (Fig.~\ref{fig_M300_dataset}) and used to train a tiny-YOLO v4 network.
The detection precision of the trained network reaches mAP=99.8\%.

The experimental results are shown in Fig.~\ref{fig_outdoor_1}.
As can be seen, the bearing-angle approach performs well. By contrast, the bearing-only approach only works well before the observer MAV reached the desired position relative to the target MAV. That is because the bearing varies significantly during this process due to the fluctuation of the observer's motion caused by the flight control. However, the bearing-only approach diverges quickly thereafter when the bearing stops varying significantly.

\section{Conclusion}\label{section_conclusion}
Motivated by the limitation of the existing bearing-only approach, this paper proposed and analyzed a novel bearing-angle approach for vision-based target motion estimation. We showed that the observability by the bearing-angle approach is significantly enhanced compared to the bearing-only one.
As a result, the requirement of the observer's extra motion for observability enhancement can be significantly relaxed.
As we showed in various experiments, the bearing-angle approach can successfully estimate the target's motion in many scenarios where the bearing-only approach fails.
The enhanced observability of the bearing-angle approach comes with no additional cost since almost all vision detection algorithms can generate bounding boxes.
One assumption adopted in the bearing-angle approach is that the target's physical size is invariant to different viewing angles. Although this assumption is approximately valid in many tasks as demonstrated in this paper, it is meaningful to study how to relax or remove this assumption in the future.


\section*{Declaration of conflicting interests}
The author(s) declared no potential conflicts of interest with respect to the research, authorship, and/or publication of this article.

\section*{Funding}
The author(s) disclosed receipt of the following financial support for the research, authorship, and/or publication of this artical: This work was supported by the Hangzhou Key Technology Research and Development Program (Grant 20212013B09), and the Research Center for Industries of the Future at Westlake University (Grant WU2022C027).

\bibliographystyle{SageH}
\bibliography{paper}

\end{document}